\newtheorem{Definition}{Definition}
\newtheorem{Theorem}{Theorem}
\newcommand{\tinytodo}[2][]
{\todo[caption={#2}, size=\small, #1]{\renewcommand{\baselinestretch}{0.5}\selectfont#2\par}}
\providecommand{\section*{Usage notes}~\\
\noindent Use \texttt{\textbackslash yourfirstnameinitials+lastname\{your comment\}} to add comments/to dos. For example,
\underline{F}an \underline{WU} adds comments using \texttt{\textbackslash fwu\{Comment.\}}.\fwu{These comments will appear as margin notes. They will also appear in the todo list on the first page.} \\

\noindent Use {\textbackslash atInitials} to address your comment to someone (\eg \texttt{\textbackslash fwu\{\textbackslash atFW: Fan Wu, Please fix this.\}}).

You can mark a comment as resolved using \texttt{\textbackslash done\{Comment text.\}}, (\eg \texttt{\textbackslash fwu\{\textbackslash done\{\textbackslash atFW: Please fix this.\}\}}). It will then be formatted like this: \done{\atMH: Please fix this.}.

To modify the comments colour code, go to comments.tex file.

To deactivate the usage note and todo list, comment out the input usage comment.

\commentcolourcode

\listoftodos
\clearpage\setcounter{page}{1}

}{\section*{Usage notes}~\\
\noindent Use \texttt{\textbackslash yourfirstnameinitials+lastname\{your comment\}} to add comments/to dos. For example,
\underline{F}an \underline{WU} adds comments using \texttt{\textbackslash fwu\{Comment.\}}.\fwu{These comments will appear as margin notes. They will also appear in the todo list on the first page.} \\

\noindent Use {\textbackslash atInitials} to address your comment to someone (\eg \texttt{\textbackslash fwu\{\textbackslash atFW: Fan Wu, Please fix this.\}}).

You can mark a comment as resolved using \texttt{\textbackslash done\{Comment text.\}}, (\eg \texttt{\textbackslash fwu\{\textbackslash done\{\textbackslash atFW: Please fix this.\}\}}). It will then be formatted like this: \done{\atMH: Please fix this.}.

To modify the comments colour code, go to comments.tex file.

To deactivate the usage note and todo list, comment out the input usage comment.

\commentcolourcode

\listoftodos
\clearpage\setcounter{page}{1}

}
\colorlet{mh}{red}
\colorlet{fwu}{red}
\colorlet{ywu}{blue}
\colorlet{kchen}{blue}
\colorlet{lchen}{green}
\colorlet{zbing}{green}
\colorlet{shaddadin}{purple}
\colorlet{iperez}{cyan}
\colorlet{schneider}{magenta}
\newcommand  {\done}[1]{\sout{#1}}
\newcommand  {\fwu}  [1]{\tinytodo[color=white,linecolor=fwu,bordercolor=white,noinline]{\protect{\scriptsize\color{fwu}#1}}}
\newcommand{\atMH} {{\color{mh}@MH}}
\newcommand{\commentcolourcode}{Comments colour code: 
{\color{fwu}F. WU}
{\color{kchen}K. CHEN}
{\color{zbing}Z. Bing}

{\color{shaddadin}S. Haddadin}
{\color{iperez} I. Perez Mahiques}
{\color{schneider} S. Schneider}
}
\mathchardef\mhyphen="2D   
\newcommand{\RNum}[1]{\uppercase\expandafter{\romannumeral #1\relax}}
\newcommand{\eg}{\textit{e}.\textit{g}.}
\title{\LARGE \bf

Elliptical K-Nearest Neighbors - Path Optimization via Coulomb's Law and Invalid Vertices in C-space Obstacles
}
\author{Liding Zhang$^{1}$, Zhenshan Bing$^{1}$, Yu Zhang$^{1}$, Kuanqi Cai$^{1}$, Lingyun Chen$^{1}$, Fan Wu$^{1}$, \\ Sami Haddadin$^{1}$,  Alois Knoll$^{1\dagger}$ 
\thanks{$^{1}$L. Zhang, Z. Bing, Y. Zhang, K. Cai, L. Chen,  F. Wu, S. Haddadin and A. Knoll are with the Department of Informatics, Technical University of Munich, Germany.
{\tt\small liding.zhang@tum.de}}%
\thanks{$^{\dagger}$The authors acknowledge the financial support by the Bavarian State Ministry for Economic Affairs, Regional Development and Energy (StMWi) for the Lighthouse Initiative KI.FABRIK (Phase 1: Infrastructure and the research and development program under grant no. DIK0249).}
\thanks{Corresponding author: Zhenshan Bing.}
}
\begin{document}

\maketitle
\thispagestyle{empty}
\pagestyle{empty}

\begin{abstract}
Path planning has long been an important and active research area in robotics. To address challenges in high-dimensional motion planning, this study introduces the Force Direction Informed Trees (FDIT*), a sampling-based planner designed to enhance speed and cost-effectiveness in pathfinding. FDIT* builds upon the state-of-the-art informed sampling planner, the Effort Informed Trees (EIT*), by capitalizing on often-overlooked information in invalid vertices. It incorporates principles of physical force, particularly Coulomb's law. This approach proposes the elliptical $k$-nearest neighbors search method, enabling fast convergence navigation and avoiding high solution cost or infeasible paths by exploring more problem-specific search-worthy areas. It demonstrates benefits in search efficiency and cost reduction, particularly in confined, high-dimensional environments. It can be viewed as an extension of nearest neighbors search techniques. Fusing invalid vertex data with physical dynamics facilitates force-direction-based search regions, resulting in an improved convergence rate to the optimum. FDIT* outperforms existing single-query, sampling-based planners on the tested problems in $\mathbb{R}^4$ to $\mathbb{R}^{16}$ and has been demonstrated on a real-world mobile manipulation task.\\
%
%
%
\end{abstract}

\section{Introduction}
In the realm of robot motion planning, especially in high-dimensional spaces, sampling-based algorithms play a pivotal role in path planning\cite{cai2023sampling}. These
algorithms are rooted in the mathematical foundation of \textit{random geometric graphs} (RGG)~\cite{penrose2003random}, probabilistic models representing randomly distributed networks in \textit{configuration space} (\textit{$\mathcal{C}$-space}).
In \textit{$\mathcal{C}$-space}, RGG position vertices randomly, extending into complex, higher-dimensional regions. Edges form between vertices within a specified distance, denoted as the graph's radius. Key features involve random vertex distribution in an $n$-dimensional space and edge creation based on proximity, typically measured using Euclidean metrics. The graph's connectivity hinges on the chosen radius, with a critical radius value determining connectivity or disconnectivity~\cite{penrose2003random}.

\begin{figure}[t!]
    \centering
    \begin{tikzpicture}

    \node[inner sep=0pt] (russell) at (-2.5,0)
    {\includegraphics[width=0.45\textwidth]{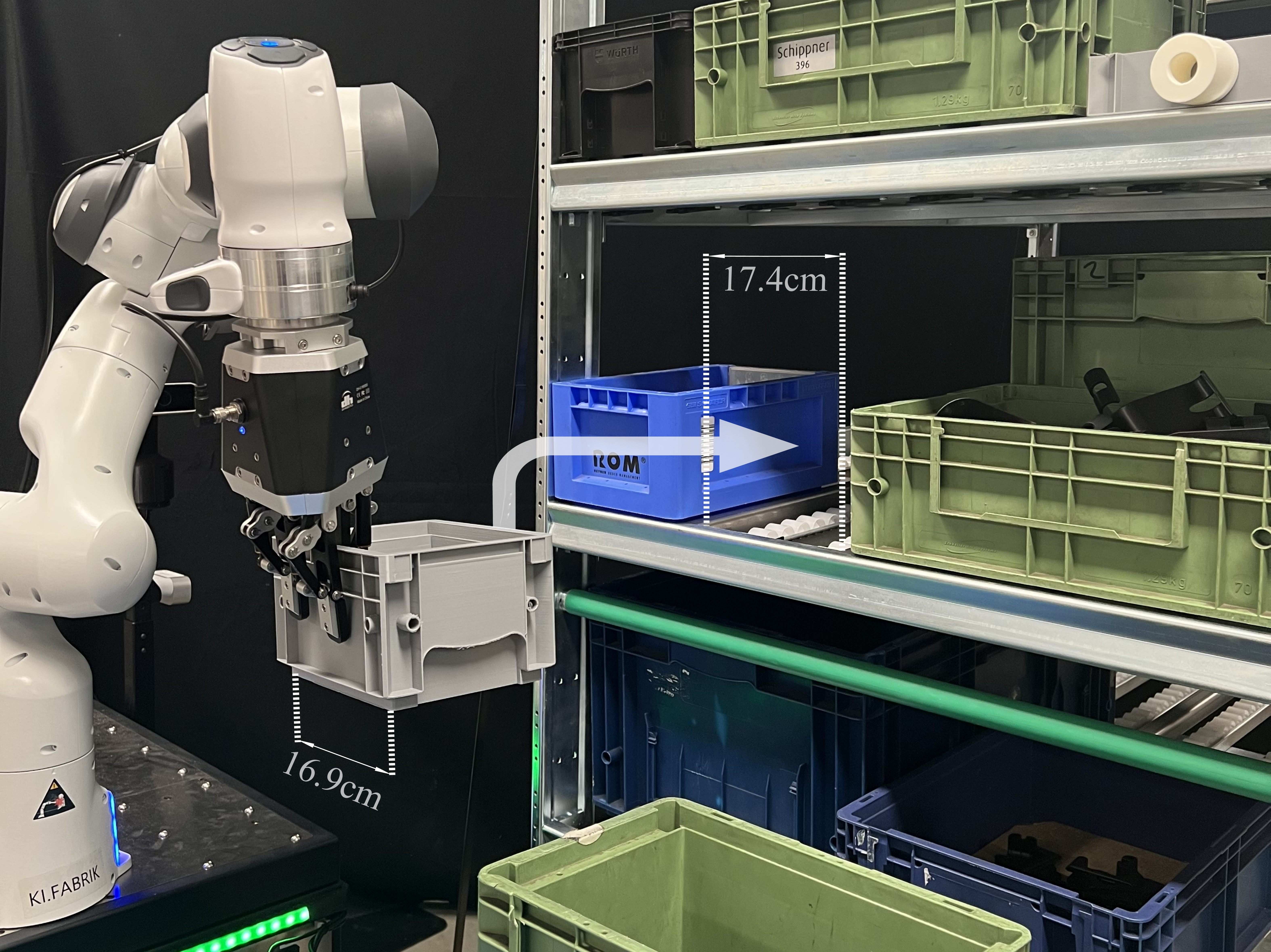}};

    \end{tikzpicture}
    \caption{Mobile manipulator robot during a real-time industry standardized ($\pm$5mm) container-shelf arrangement task (Section VI-B).}
    \label{fig: darko_setup}
    \vspace{-1.7em} 
\end{figure}

Motion planning, rooted in the A* algorithm~\cite{hart1968formal}, has developed in various directions. Lifelong planning A* (LPA*)\cite{koenig2004lifelong} addresses planning tasks in dynamic scenarios, while sampling-based planner Rapidly-exploring Random Trees (RRT)\cite{lavalle2001randomized} explores nonconvex and high-dimensional spaces. RRT builds a tree from a starting point, aiming for the goal point. RRT-Connect~\cite{connect2000} enhances efficiency by growing trees from both the start and goal positions. RRT*~\cite{karaman2011sampling} incrementally builds a tree using \textit{$k$-nearest}~\cite{fmt2015} search, providing anytime solutions and provably converging to an optimal solution. Informed RRT*~\cite{gammell2014informed} simplifies the search and sampling set using elliptical concepts (\textit{informed-sampling}~\cite{gammell2018informed}), thereby enhancing the convergence rate and the quality of the final solution during path optimization.

Batch Informed Trees (BIT*)\cite{gammell2020batch} integrates Informed RRT* and Fast Marching Trees (FMT*)~\cite{fmt2015}, combining single and multiple sampling techniques to compactly group states (\textit{batch}) into implicit RGG. Its advanced version, ABIT*~\cite{strub2020advanced}, uses an inflation factor and a truncation factor to balance exploration and exploitation in a dense RGG approximation.
%
Adaptively Informed Trees (AIT*)\cite{strub2022adaptively, strub2020adaptively} and Effort Informed Trees (EIT*)\cite{strub2022adaptively} employ an asymmetrical \textit{forward-reverse} search method with sparse collision checks in the reverse search. These adaptations enhance the efficiency of exploration in motion planning algorithms~\cite{zhang25g3t}.
It has evolved from evaluating single-dimensional points to higher-dimensional regions. 
However, the common utilization of finding nearest neighbors for path rewiring is either mutual \textit{$k$-nearest} or \textit{$r$-nearest}~\cite{Kleinbort_Salzman_Halperin_2020} neighbors, those methods neglect  information on obstacles and often abandon invalid vertices. The subsequent advancement involves exploring clusters of points or valuable regions, such as \textit{invalid} vertices.

\begin{figure}[t!]
    \centering
    \begin{tikzpicture}
    \node[inner sep=0pt] (russell) at (0.0, 0.0)
    {\includegraphics[width=0.48\textwidth]{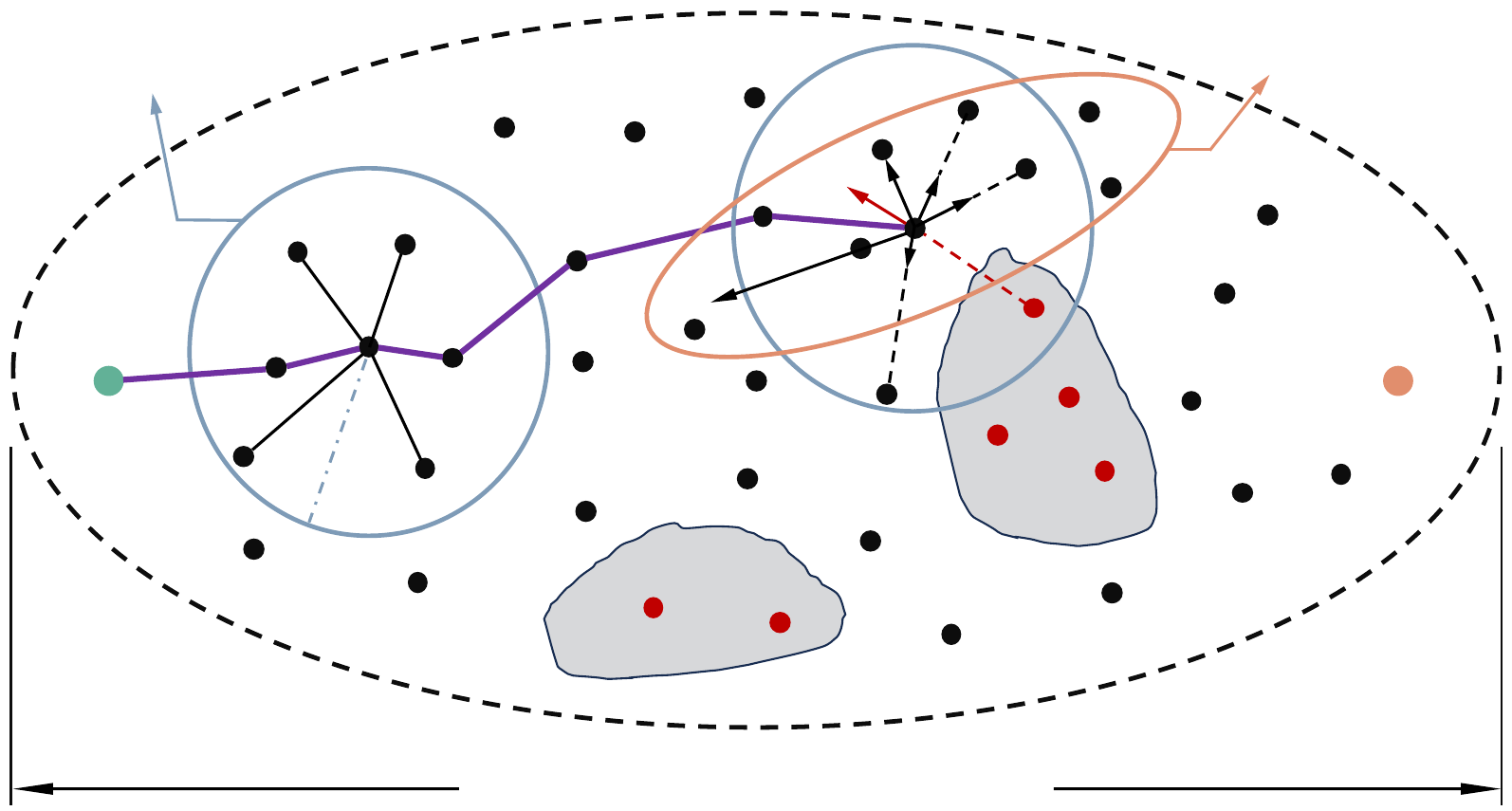}};
    \node at (-3.7,-0.12) {$\mathbf{x}_\text{start}$};
    \node at (3.7,-0.12) {$\mathbf{x}_\text{goal}$};
    \node at (-0.2,-0.9) {${X}_\text{obs}$};
    \node at (1.76,-0.53) {\color{purple}$V_\textit{invalid}$};
    \node at (-0.5,-0.1) {$V_\textit{valid}$};
    \node at (0.0,-2.22) {\textit{Informed Sampling}};
    \node at (3.4,1.98) {\footnotesize\textit{Elliptical-KNN}};
    \node at (-3.55,1.9) {\footnotesize\textit{KNN}};
    \node at (-2.25,-0.5) {\color{teal}$k$};
    
    \end{tikzpicture}
    \caption{The force directional elliptical $k$-nearest search is visualized. Black and red dots are sampled as valid and invalid vertices, $k$ donates as the connection parameters, black thin lines as forward rewire trees, magenta lines as the initial path, arrows indicate force directions and orange ellipses denote the area of elliptical-KNN in the forward search. Undesirable vertices are excluded from the neighbors, and relevant vertices are included in the elliptical-KNN.}
    \label{fig: illstra}
    \vspace{-1.7em} 
\end{figure}

This paper introduces Force Direction Informed Trees (FDIT*), an asymptotically optimal sampling-based planner that employs a force-direction method and elliptical $k$-nearest neighbors (elliptical-KNN) search to enhance point evaluation techniques. These methods use invalid samples in $\mathcal{C}$-space obstacles to guide the search toward valuable areas, improving search efficiency in pathfinding.

FDIT* enhances exploration by using invalid vertices and physical forces. It extends traditional $k$-nearest neighbors search through Coulomb’s law-based force direction analysis, guiding the search and avoiding costly or infeasible areas. This improves search efficiency and reduces path costs, especially in high-dimensional, multi-narrow passage scenarios. Demonstrated in Fig.~\ref{fig: darko_setup}, FDIT* was tested on the DARKO robot for intralogistics tasks, showcasing efficient motion planning in confined environments. Its potential extends to automation and robotics, proving robust and effective in autonomous robotics and path planning.

The contributions of this work are summarized as follows:
\begin{itemize}
    \item \textit{Elliptical $k$-nearest neighbors:} FDIT* defines nearest neighbors by summing forces influenced by valid and invalid samples around current states. This enhances the likelihood of approaching more explore-worthy regions.
    \item \textit{Improved path quality:} FDIT* incorporates force direction analysis based on Coulomb’s law, resulting in more obstacle-awareness direction searches and up to 34.65\% initial solution cost reduction in various test scenarios.
    \item \textit{Real-world application in autonomous robotics:} FDIT* was tested in an industry-standard shelf-container arrangement task, and it attained the highest success rate (80\% under 1.0s) due to problem-specific exploration.
\end{itemize}

\section{Background\string& Preliminaries}
In an RGG, the connections (i.e., edges) between states (i.e., vertices or sample points) depend on their relative geometric positions. Common RGGs have edges to a specific number of each state's nearest neighbors (a $k$-nearest graph~\cite{Xue2004}) or all neighbors within a particular distance (an $r$-nearest graph~\cite{1961r_near}). 
RGG theory establishes probabilistic relationships between the number and distribution of samples, the $k$ or $r$ defining the graph, and specific graph properties such as connectivity or relative cost through the graph~\cite{karaman2011sampling}.
\subsection{$R$-Nearest Neighbors (RNN) Search}{
The connection radius $r(q)$ is a dynamic parameter that scales according to the density of the state space and the sample size $q$. It is defined as:
    \begin{equation}
    \label{eqn: radius KNN}
        r(q) := 2\eta \left(\left(1 + \frac{1}{n}\right){\left(\frac{\lambda(X_{\hat{f}})}{\lambda\left(B_{1, n}\right)}\right) \left( \frac{\log(q)}{q}\right)}\right)^{\frac{1}{n}},
    \end{equation}
where $\eta$ is a normalization constant, $n$ represents the dimensionality of the space, $\lambda(\cdot)$ denotes the Lebesgue measure, $X_{\hat{f}}$ is the informed set, and $B_{1,n}$ is a unit ball in an $n$-dimensional space~\cite{strub2022adaptively}.
This formula ensures that $r(q)$ adapts to the complexity of the space, enhancing the graph's connectivity without increasing computational effort~\cite{solovey2020critical}.
}
\subsection{$K$-Nearest Neighbors (KNN) Search}{
The number of nearest neighbors \( k(q) \), crucial for defining the graph's sparsity, is determined as follows:
\begin{equation}\label{eq: knn}
    k(q) := \eta e \left( 1 + \frac{1}{n} \right) \log(q),
\end{equation}

This formulation of \( k(q) \), as suggested in \cite{andoni2009nearest}, is designed to balance between thorough exploration of the state space and maintaining a manageable graph size. As the number of samples \( q \) increases, \( k(q) \) scales logarithmically, ensuring that each new node connects to a sufficient number of neighbors to maintain robustness in the graph structure.
}
\subsection{Coulomb's Law in Multi-Dimensional Space}
Coulomb's law, foundational in classical electrodynamics, explains the force between two point charges in three-dimensional space~\cite{lenard1961exact}.
The force is inversely proportional to the square of the distance between charges and directly proportional to the product of their magnitudes. Charges of the same type repel, while opposite charges attract. In an $n$-dimensional vacuum, the Coulomb's force magnitude between static charges is inversely proportional to the $n-1$ power of the distance.
Given two charges $p$ and $p^{\prime}$ with magnitudes $q$ and $q^{\prime}$, and positions $r$ and $r^{\prime}$, the Coulomb force vector from $q^{\prime}$ to $q$ is represented as:
\begin{equation}
    \vec{F}\left(p, p^{\prime}\right) = \mathcal{E} q^{\prime} \vec{r} = k_{n} \frac{q q^{\prime}}{\left\|r-r^{\prime}\right\|^{n-1}} \vec{r},
\end{equation}
where $\mathcal{E}$ denotes the electric field intensity at $p^{\prime}$, $k_{n}$ is the electrostatic constant in $n$-dimensional space, and $\vec{r}$ is the unit vector from $p^{\prime}$ to $p$.
The Coulomb force vector can be detailed in components as $\left[f_{1}\left(p, p^{\prime}\right), \ldots, f_{n}\left(p, p^{\prime}\right)\right]$, where $n$ is the dimensionality of the field. The Coulomb forces are unique and independent of dimension for any two distinct charges.
In an $n$-dimensional electrostatic field $Q$ composed of charges, the Coulomb resultant force on a charge $p$ is the sum of forces exerted by other charges, defined as:
\begin{equation}
    \overrightarrow{F_{Q}}(p) := \sum_{p^{\prime} \in Q \backslash p} \vec{F}\left(p, p^{\prime}\right) = \left[f_{1}(p), \ldots, f_{n}(p)\right].
\end{equation}
where $f_{n}(p)$ is the summation of the $n$-dimensional components of the Coulomb force on $p$ by other charges.
Coulomb forces and resultant forces provide the relative positioning and deviation of charge $p$ within the electrostatic field.

For sampling-based planners, the Open Motion Planning Library (OMPL) \cite{sucan2012open}, widely employed in benchmarking motion-planning algorithms, provides a comprehensive framework and tools for researchers to evaluate algorithms. Force Direction Informed Trees (FDIT*) is integrated into the OMPL framework, the Planner-Arena benchmark database \cite{moll2015benchmarking}, and Planner Developer Tools (PDT) \cite{gammell2022planner}
\section{Problem Formulation}
\begin{algorithm}[t!]
\caption{{get\_best\_ellipse\_k\_nearest\_samples}}
\label{alg:elliptcal_search}
\small
\DontPrintSemicolon
\SetKwInOut{Input}{Input}
\SetKwInOut{Output}{Output}
\SetKwFunction{ComputeForceDirection}{compute\_force\_direction}
\SetKwFunction{IsValid}{is\_valid}
\SetKwFunction{EllipseNeighbors}{ellipse\_neighbors}
\SetKwFunction{CalculateRatio}{calculate\_ratio}

\Input{$\mathbf{x}$ - The current state, $k$ - Number of nearest neighbors to find}
\Output{Best set of $k$ nearest neighbors within an ellipsoidal region around $\mathbf{x}$}

\BlankLine

${F}_\mathcal{\hat{D}} \gets \text{an $n$-dimensional vector of all ones}$\\ \Comment{initial force direction vector}\\
\( \textit{totalVertices} \gets 0 \) , \( \textit{invalidVertices} \gets 0 \)\;
$V_\textit{ellipseNeighbors} \gets \EllipseNeighbors(\mathbf{x}, k, {F}_\mathcal{\hat{D}})$\\
$\Phi \gets \CalculateRatio(V_\textit{ellipseNeighbors})$\\ \Comment{initialize charge ratio, skip loop when 90\% NN is valid}

\While(\Comment{NN invalid samples less than 10\%}){$\Phi \geq 0.1$}{
    ${F}_\mathcal{\hat{D}} \gets \ComputeForceDirection(\mathbf{x}, V_{\textit{ellipseNeighbors}})$\\
    \emph{\color{purple}$V_{\textit{ellipseNeighborsNew}} \gets \EllipseNeighbors(\mathbf{x}, k, {F}_\mathcal{\hat{D}})$}\\ \Comment{recalculate NN based on the updated force direction}

    \ForEach{\(\mathbf{x}_i \in V_\textit{ellipseNeighborsNew}\)}{
        \( \textit{totalVertices} \gets \textit{totalVertices} + 1 \)\;
        \If{\textbf{not} $\IsValid(\mathbf{x}_i)$}{
            \( \textit{invalidVertices} \gets \textit{invalidVertices} + 1 \)\;
        }
    }
    
    \eIf{\(\textit{totalVertices} > 0\)}{
        \( \Phi \gets {{\textit{invalidVertices}}}/{\textit{totalVertices}} \)\\
        \Comment{non-negative charge ratio $\in$ $(0,1]$}
    }{
        \( \Phi \gets 0 \)\Comment{no samples in nearest neighbor}
    }
    
    $V_\textit{{ellipseNeighbors}} \gets V_\textit{{ellipseNeighborsNew}}$
}
\ForEach{$\mathbf{x}_i \in V_\textit{{ellipseNeighbors}}$}{
    \If{\IsValid{$\mathbf{x}_i$}}{
        $V_\textit{{valid}} \gets V_\textit{{valid}} \cup \{\mathbf{x}_i\}$
    }
}
\Return $V_\textit{{ellipseNeighbors}}$
\end{algorithm}
\begin{algorithm}[t]
\caption{{ellipse\_neighbors($\mathbf{x}$, ${F}_\mathcal{\hat{D}}$})}
\label{alg:ellipse_neighbors}
\SetAlgoLined
\SetKwInOut{Input}{input}
\SetKwInOut{Output}{output}
\SetKwFunction{FEllipseNearest}{ellipse\_nearest}
\SetKwFunction{FParent}{parent}
\SetKwFunction{FChildren}{children}
\DontPrintSemicolon
\small
\Input{$\mathbf{x}$ - The current state, ${F}_\mathcal{\hat{D}}$ - Force direction vector}
\Output{$V_\textit{ellipseNeighbors}$ - The nearest neighbor vertices within the ellipsoidal region}

$V_\textit{ellipseNeighbors} \gets \FEllipseNearest(\mathbf{x}, k, {F}_\mathcal{\hat{D}})$\\
$V_\textit{ellipseNeighbors} \stackrel{+}{\leftarrow} V_\textit{ellipseNeighbors} \cup \{\FParent_\mathcal{F}(\mathbf{x}) \cup \FChildren_\mathcal{F}(\mathbf{x})\} \setminus V_\textit{ellipseNeighbors}$\\
$V_\textit{ellipseNeighbors} \stackrel{-}{\leftarrow} \{X_i \in V_\textit{ellipseNeighbors} | (X, X_i) \in E_{\mathcal{F},\textit{invalid}}\}$\\
\KwRet $V_\textit{ellipseNeighbors}$
\end{algorithm}
We define the optimal planning problem as outlined in~\cite{karaman2011sampling}. In this context:

\textit{Problem Definition 1 (Optimal Planning):} Given a planning problem with state space $X \subseteq \mathbb{R}^n$, where $X_{\text{obs}}$ denotes states in collision with obstacles, and $X_{\text{free}} = cl(X \setminus X_{\text{obs}})$ represents permissible states, where $cl(\cdot)$ represents the \textit{closure} of a set. The initial state is $\mathbf{x}_{\text{start}} \in X_{\text{free}}$, and the desired final states are in $X_{\text{goal}} \subset X_{\text{free}}$. A continuous map $\sigma: [0, 1] \mapsto X$ represents a collision-free path, and $\Sigma$ is the set of all nontrivial paths.

The optimal solution, denoted as $\sigma^*$, is the path minimizing a chosen cost function $s: \Sigma \mapsto \mathbb{R}{\geq 0}$. This path connects the initial state $\mathbf{x}_{\text{start}}$ to any goal state $\mathbf{x}_{\text{goal}} \in X_{\text{goal}}$ through the free space:
\begin{equation}
\begin{split}
    \sigma^* &= \arg \min_{\sigma \in \Sigma} \left\{ s(\sigma) \middle| \sigma(0) = \mathbf{x}_{\text{start}}, \sigma(1) \in \mathbf{x}_{\text{goal}}, \right. \\
    &\qquad\qquad \left. \forall t \in [0, 1], \sigma(t) \in X_{\text{free}} \right\}.
\end{split}
\end{equation}
where $\mathbb{R}_{\geq 0}$ denotes non-negative real numbers. The cost of the optimal path is $s^*$.

Considering a discrete set of states, $X_{\text{samples}} \subset X$, represented as a graph with edges determined algorithmically by a transition function, we can characterize its properties using a probabilistic model, specifically implicit dense RGGs, i.e., $X_{\text{samples}} = {\mathbf{x} \sim \mathcal{U}(X)}$, as discussed in~\cite{penrose2003random}.

\textit{Problem Definition 2 (Nearest Neighbors):}
A fundamental primitive that frequently arises is the \textit{nearest neighbors} (NN) problem. The problem is defined as follows: given a collection of $n$ objects, build a data structure that, upon receiving an arbitrary query object, reports the dataset object that is most similar to the query.
The concept of Randomized $c$-approximate RNNs, denoted as $(c, R)$-NN, is central in computational geometry. Consider a set $D$ comprising points in a $n$-dimensional space $\mathbb{R}^n$, along with parameters $R > 0$ and $\delta > 0$. The objective is to devise a data structure capable of, when presented with any query point $q$, identifying some $cR$-near neighbors of $q$ within $D$ with a probability of success no less than $1 - \delta$.
Assuming $\delta$ is a constant not close to $1$ (e.g., $1/2$), success probability can be boosted by building and querying multiple data structure instances~\cite{andoni2009nearest}.

\begin{figure*}[t!]
    \centering
    \begin{subfigure}[b]{0.32\textwidth}
        \includegraphics[width=\textwidth]{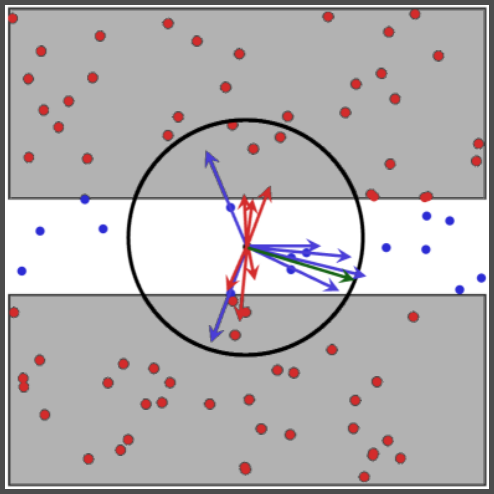}
        \caption{Initial iteration with a circular search area, balancing negative and positive forces, with \( k = 12 \), negative \( = 6 \), positive \( = 6 \).}
        \label{fig:circle1}
    \end{subfigure}
    \hfill
    \begin{subfigure}[b]{0.32\textwidth}
        \includegraphics[width=\textwidth]{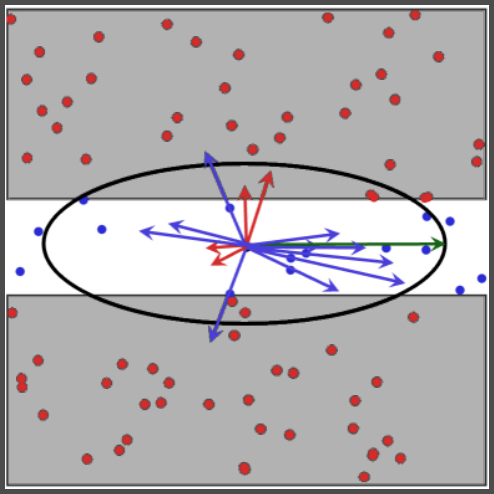}
        \caption{Subsequent iteration with an elliptical search area, influenced by the dominant resultant force, with \( k = 14 \), negative \( = 4 \), positive \( = 10 \).}
        \label{fig:circle2}
    \end{subfigure}
    \hfill
    \begin{subfigure}[b]{0.32\textwidth}
        \includegraphics[width=\textwidth]{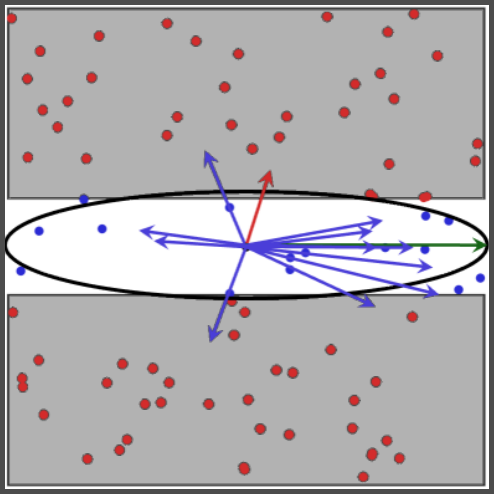}
        \caption{Final iteration with a refined search area, following the optimized direction of the resultant force, with \( k = 12 \), negative \( = 1 \), positive \( = 11 \).}
        \label{fig:circle3}
    \end{subfigure}
   
    \begin{tikzpicture}
        \node[inner sep=0pt] (russell) at (0.0,0.0)
        {\includegraphics[width=0.77\textwidth]{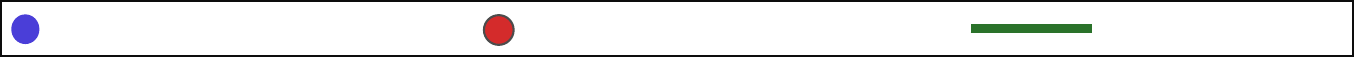}};
        \node at (-4.7,-0.03) {\small\text{Valid Samples (Positive)}};
        \node at (0.3,-0.03) {\small\text{Invalid Samples (Negative)}};
        \node at (5.45,0.0) {\small\text{Resultant Force}};
    \end{tikzpicture}
    \caption{Illustration of the progression of the search area refinement in FDIT* on a narrow corridor benchmark scenario, it demonstrates how the algorithm incrementally focuses the search towards the most non-obstacle feasible regions, as the cumulative resultant force directs (Alg.~\ref{alg:elliptcal_search}). Red points represent invalid samples that fall within obstacles. Blue points signify valid samples within obstacle-free areas. Corresponding arrows indicate the force exerted by each point on the current position. The green arrow is the resultant force that guides the search towards feasible regions. The gray areas represent the obstacle block's walls, and the white spaces delineate the accessible areas for path planning. \( k \) represents the total number of nearest neighbors considered during the search.}
    \label{fig:search_areas}
    \vspace{-1.7em} 
\end{figure*}

\section{Force Direction Informed Trees (FDIT*)}
\begin{algorithm}[t!]
\caption{{ellipse\_nearest($\mathbf{x}$, $X_\textit{all}$, ${F}_\mathcal{\hat{D}}$, $k$)}}
\label{alg:ellipseNearest}
\DontPrintSemicolon
\small
\SetKwInOut{Input}{Input}
\SetKwInOut{Output}{Output}
\SetKwFunction{FIsWithinEllipse}{is\_within\_ellipse}
\SetKwFunction{FIsValid}{is\_valid}
\SetKwFunction{Fsort}{sort}

\Input{$\mathbf{x}$ - The current state, $X_\textit{all}$ - All available vertices, ${F}_\mathcal{\hat{D}}$ - Force direction vector, $k$ - The number of nearest valid vertices to find}
\Output{$V_\textit{ellipseNearestValid}$ - The nearest $k$ valid vertices within the ellipsoidal region, $V_\textit{valid}$ - All valid vertices, $V_\textit{invalid}$ - All invalid vertices}
\textit{Initialization-filter vertices within the ellipsoidal region and separate valid and invalid vertices}\\
$V_\textit{sortedValid} \gets \emptyset$, $V_\textit{valid} \gets \emptyset$, $V_\textit{invalid} \gets \emptyset$, $V_\textit{ellipseNearest} \gets \emptyset$\;
\ForEach{$\mathbf{x}_i \in X_{all}$}{
  \If{\FIsWithinEllipse{$\mathbf{x}$, $\mathbf{x}_i$, ${F}_\mathcal{\hat{D}}$}}{
    \If{\FIsValid{$\mathbf{x}_i$}}{
      $V_\textit{valid} \gets V_\textit{valid} \cup \{\mathbf{x}_i\}$\;
    }{
      $V_\textit{invalid} \gets V_\textit{invalid} \cup \{\mathbf{x}_i\}$\;
    }
  }
}
$V_\textit{sortedValid} \gets$ \Fsort{$V_\textit{valid}$}\;
$V_\textit{ellipseNearestValid} \gets \text{first $k$ elements of } V_\textit{sortedValid}$\;

\Return{$V_\textit{ellipseNearestValid}$}\;
\end{algorithm}


FDIT* uses elliptical $k$-nearest neighbors and Coulomb's law-based force direction to guide the search process, steering it away from collisions and mimicking natural navigation patterns. A force computation formula (Alg.~\ref{computationforcedirection}) and elliptical-KNN search are introduced to explore neighboring areas.

\subsection{Notation}~\label{subsec: notation}
The state space of the planning problem is denoted by $X \subseteq \mathbb{R}^n$, where $n \in \mathbb{N}$. The start vertex is represented by $\mathbf{x}_{\text{start}} \in X$, and the goals are denoted by $X_{\text{goal}} \subset X$. The sampled states are denoted by $X_{\text{sampled}}$. 
The forward and reverse search trees are represented by $\mathcal{F}$ and $\mathcal{R}$, respectively. The vertices and edges in these trees, denoted by $V_\mathcal{F}$ and $V_\mathcal{R}$, are associated with valid states. 
The edges in the forward tree, $E_\mathcal{F} \subseteq V_\mathcal{F} \times V_\mathcal{F}$, represent valid connections between states.
An edge comprises a source state, $\mathbf{x}_s$, and a target state, $\mathbf{x}_t$, denoted as $(\mathbf{x}_s, \mathbf{x}_t)$. 
An admissible estimate for the cost of a path is $\hat{f}: X \rightarrow [0, \infty)$.
%
For sets $A, B,$ and $C$ with $B, C$ being subsets of $A$, the notation $B \stackrel{+}{\leftarrow} C$ denotes $B \leftarrow B \cup C$, and $B \stackrel{-}{\leftarrow} C$ denotes $B \leftarrow B \setminus C$.

\textit{FDIT*-specific Notation:}
Force calculations are conducted through a formula incorporating elliptical search neighbors, denoted as $V_{\textit{ellipseNeighbors}}$. Distinct electrical charges characterize samples within or outside obstacles: invalid charges $V_{\textit{invalid}}$ associated with energy $\mathcal{E}_{\textit{invalid}}$, and valid charges $V_{\textit{valid}}$ with energy $\mathcal{E}_{\textit{valid}}$. The charge ratio within $V_{\textit{ellipseNeighbors}}$ is denoted as $\Phi$, and the direction of the resultant force on each vertex is represented by $F_{\mathcal{\hat{D}}}$. The electric field of nearest neighbor samples within $V_{\textit{ellipseNeighbors}}$ is denoted as $Q_{\textit{valid}}$, $Q_{\textit{invalid}}$, respectively. Subsequent force calculations are executed on the acquired valid set to evaluate its quality. 

${F}_{\mathcal{\hat{D}}}$ serving as a vector to store the force direction information and operates within the same \textit{$\mathcal{C}$-space} dimension as the RGG. Spatial coordinates of valid and invalid sample points are respectively expressed as $\mathbf{x}_{\textit{positive}}$ and $\mathbf{x}_{\textit{negative}}$. The parameter $\rho_0$ remains constant in this context. 
The number of valid samples is denoted as $m_{\textit{valid}}$, while $m_{\textit{invalid}}$ represents the count of invalid sample points. Electric charges of samples, denoted by $q_1$ and $q_2$, correspond to the current vertex and a vertex from the surrounding set, both scalars. The attractive force exerted by valid sample points on the current vertex is represented by $\vec{F}_{\textit{attractive}}$, while $\vec{F}_{\textit{repulsive}}$ indicates the repulsive force exerted by invalid sample points. 
Magnetic fields vector generated by surrounding sample points and the energy of the current vertex within these magnetic fields are represented by $\vec{\mathcal{E}}_{\textit{attractive}}$ and $\vec{\mathcal{E}}_{\textit{repulsive}}$, respectively. 

The Euclidean distance between two samples is denoted as $r_i$, and the unit direction vector between two samples is represented by $\mathbf{\hat{r_i}}$. The magnitude and direction of the force exerted by the $i^{th}$ vertex on the current vertex are denoted by $\vec{F_i}$. Sets such as $V_{\textit{ellipseNeighborsNew}}$, $V_{\textit{ellipseNeighborsValid}}$, and $V_{\textit{sortedValid}}$ are utilized to store related vertices correspondingly. The non-negative scalar charge ratio $\Phi$ represents the scale between valid and invalid vertices.
\begin{algorithm}[t]
\caption{\small{ compute\_force\_direction($\mathbf{x}$, $V_\textit{{ellipseNeighbors}}$)}}
\label{computationforcedirection}
\SetKwInOut{Input}{input}
\SetKwInOut{Output}{output}
\SetKwFunction{FDistance}{distance}
\SetKwFunction{FIsValid}{is\_valid}
\SetKwFunction{FSingleForce}{single\_force} 
\small
\DontPrintSemicolon

\Input{$\mathbf{x}$ - The current state, $V_\textit{{ellipseNeighbors}}$ - A set of neighboring samples from ellipse neighbors}
\Output{${F}_\mathcal{\hat{D}}$ - The resultant force direction vector}

${F}_\mathcal{\hat{D}} \leftarrow \vec{0}$\Comment{initialize as a zero vector for safety} \\
\ForEach{$\mathbf{x}_i \in V_\textit{{ellipseNeighbors}}$}{
     $r_i \gets \FDistance(\mathbf{x}, \mathbf{x}_i)$ \Comment{compute the states distance}\\
      $\mathbf{\hat{r}}_i \gets \frac{\mathbf{x}_i - \mathbf{x}}{r_i}$  \Comment{compute the unit vector direction of $\mathbf{x}_i$}\\
      
      \eIf{\IsValid{$\mathbf{x}_i$}}{
       
        \emph{\color{purple}$\vec{F}_{\textit{attractive},i} \stackrel{+}{\leftarrow}  \frac{Q_\textit{valid}}{r_i^{n-1}} \mathbf{\hat{r}}_i$ \label{line:forceAttra}}\Comment{attractive positive force}\\
      }{
       
        \emph{\color{purple}$\vec{F}_{\textit{repulsive},i} \stackrel{+}{\leftarrow} - \frac{Q_\textit{invalid}}{r_i^{n-1}} \mathbf{\hat{r}}_i$} \Comment{repulsive negative force}\\
      }
    ${F}_\mathcal{\hat{D}} \leftarrow {F}_\mathcal{\hat{D}} \pm \vec{F}_i$ \Comment{add or minus depends on direction} 
}
\Return ${F}_\mathcal{\hat{D}}$\;
\end{algorithm}
\begin{figure*}[t!]
    \centering
    \begin{tikzpicture}
    \footnotesize
    \node[inner sep=0pt] (russell) at (-8.0,0)
    {\includegraphics[width=0.244\textwidth]{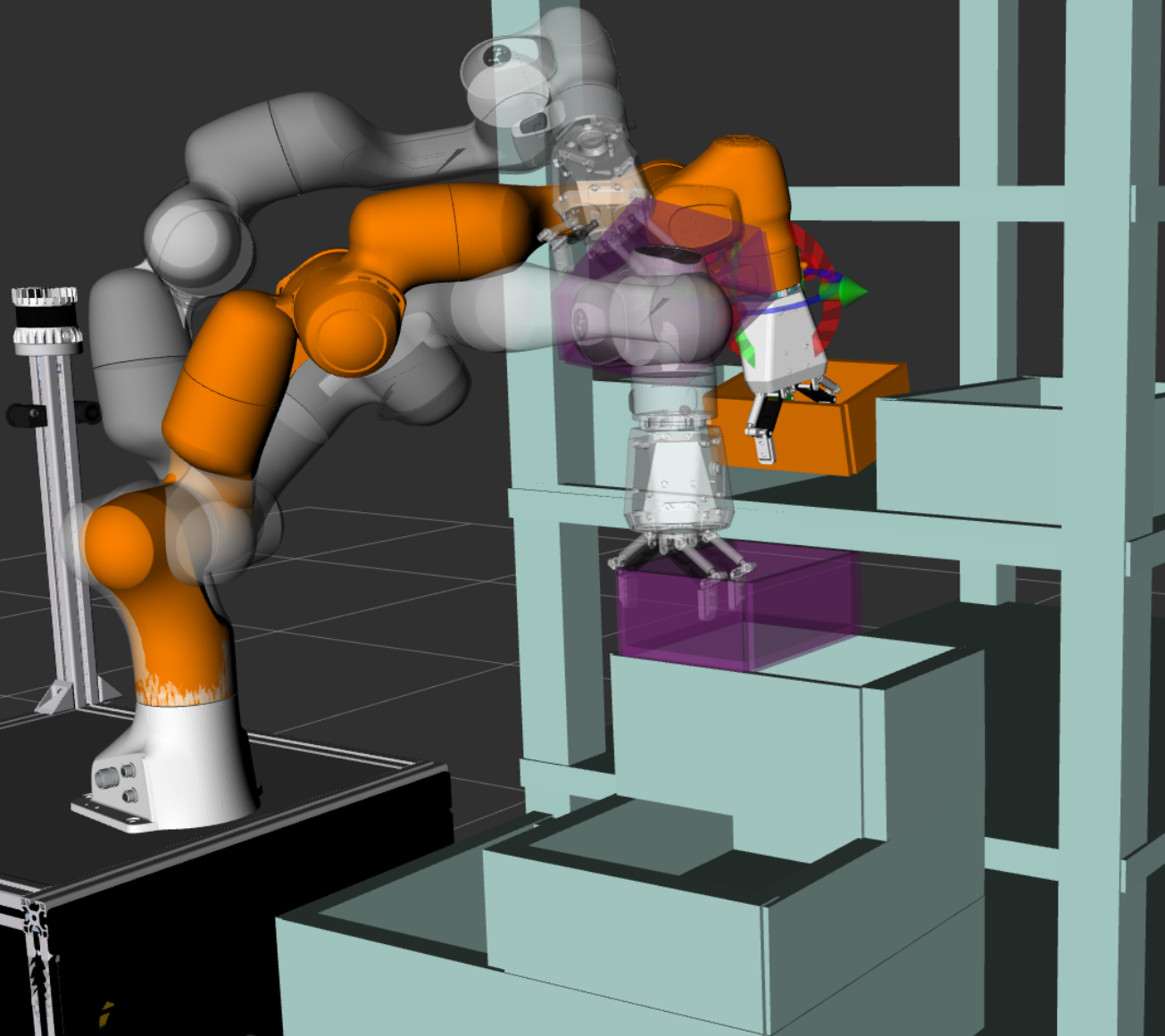}};
    \node[inner sep=0pt] (russell) at (-3.5,0)
    {\includegraphics[width=0.244\textwidth]{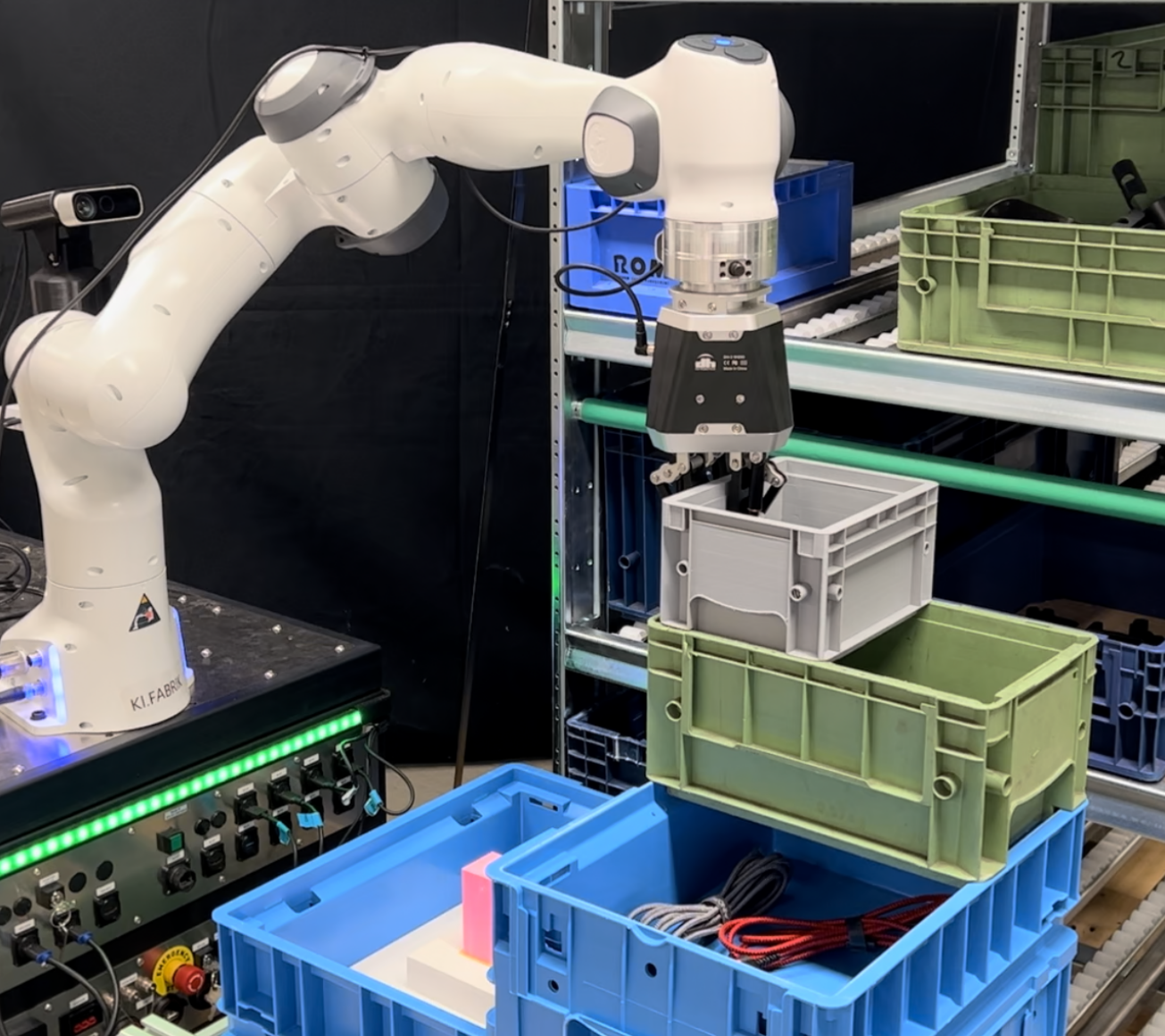}};
    \node[inner sep=0pt] (russell) at (1.0,0)
    {\includegraphics[width=0.244\textwidth]{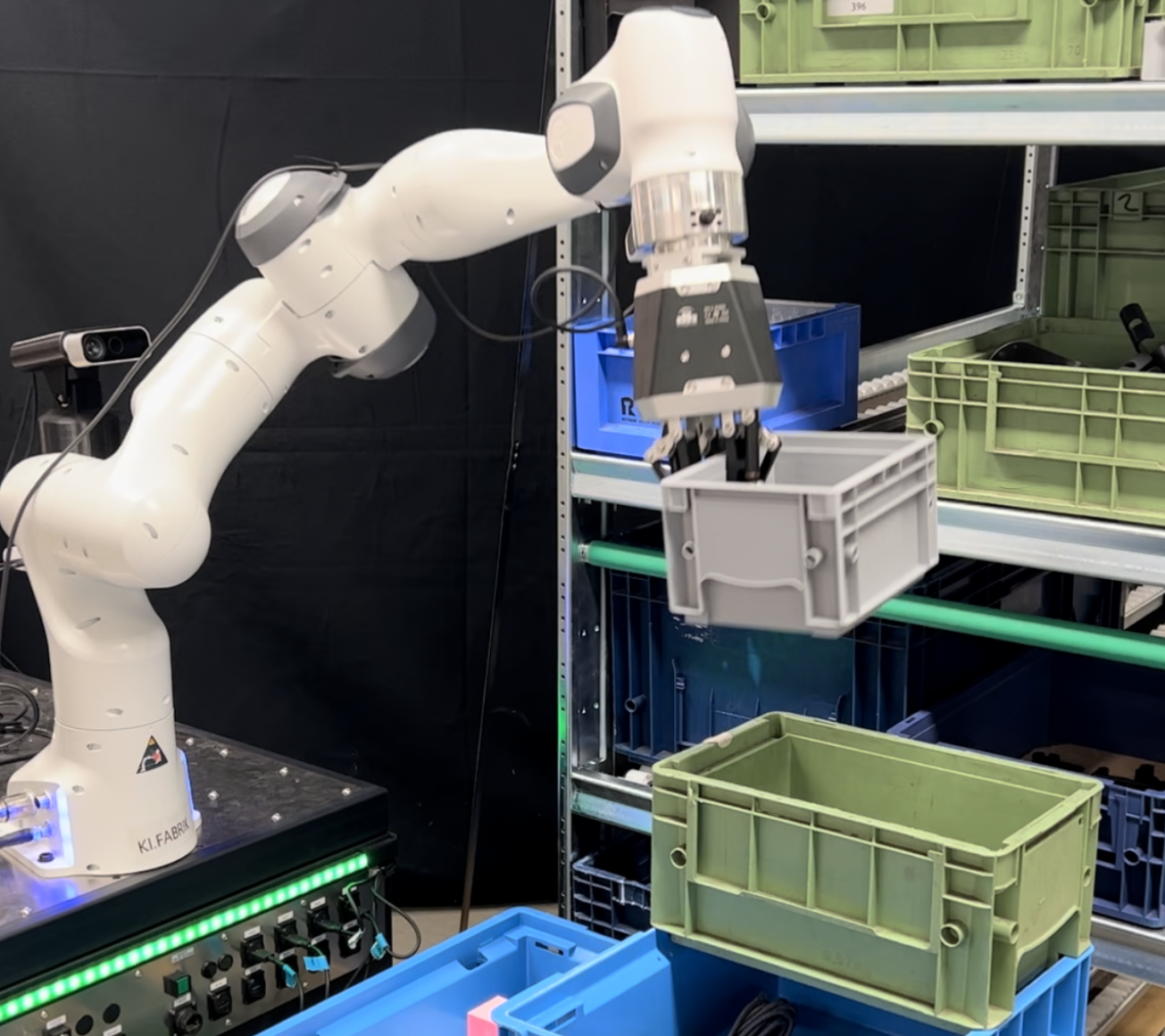}};
    \node[inner sep=0pt] (russell) at (5.5,0)
    {\includegraphics[width=0.244\textwidth]{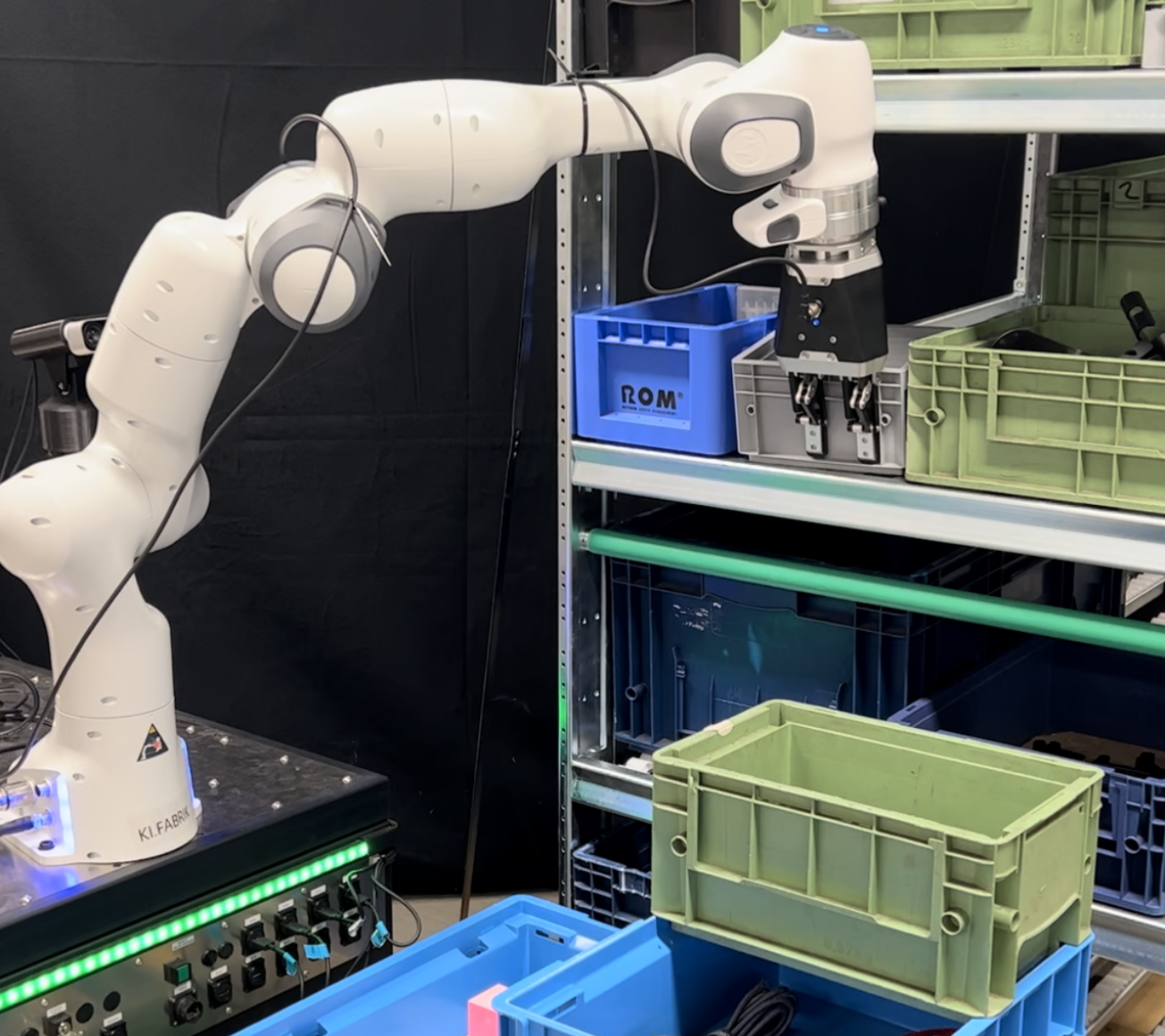}};
    
    \node at (-8.0,-2.4) {\small (a) Simulation scenery};
    \node at (-3.5,-2.4) {\small (b) Start configuration};
    \node at (1.0,-2.4) {\small (c) Transition configuration};
    \node at (5.5,-2.4) {\small (d) Goal configuration};
    
    \end{tikzpicture}
    \caption{Illustrates the simulation (a) and the real-world scenarios of DARKO robot for the industry shelf-arrangement task, (b) shows the start configuration of the arm to pick up the gray container from the pallet, (c) shows the transition configuration position of the task, (d) shows the goal configuration of the arm to place the container into the \textit{narrow corridor} (tolerances $\pm$5mm) between two other containers.}
    \label{fig: simulation}
    \vspace{-1.5em}
\end{figure*}
\subsection{Extension of Coulomb's Law to High Dimensions}\label{subsec: coulomb}
Consider a \(n\)-dimensional Euclidean space \( \mathbb{R}^n \). Let two vertex charges, \( q_1 \) and \( q_2 \), be located in this space, separated by a distance \( r \). The unit vector from \( q_1 \) to \( q_2 \) is denoted by \( \mathbf{\hat{r}} \). The Coulomb force in high-dimension is proposed to be:

\begin{equation}
\vec{F} := \mathbf{k}_e \frac{q_1 q_2}{r^{n-1}} \mathbf{\hat{r}},
\end{equation}
where \( \vec{F} \) represents the force vector exerted by the charges on each other. The proportionality constant \( \mathbf{k}_e \) is analogous to Coulomb's constant in three dimensions, determining the strength of the force. \( n \) represents the dimensionality of the \textit{$\mathcal{C}$-space} where these interactions occur, and \( r \) signifies the Euclidean distance between the charges.

\subsection{Elliptical $K$-Nearest Neighbors Search Method}\label{subsec: Eliptical}
The elliptical-KNN search method deviates from the conventional spherical search area and instead utilizes an elliptical or, more generally, an $n$-dimensional ellipsoidal search space~\cite{ohmori2021predictive}.
The uniqueness of the elliptical search lies in its adaptive nature (Fig.~\ref{fig:search_areas}), where the search area's shape dynamically aligns with the direction and magnitude of the Coulomb forces acting on the current state. 
%
\begin{Definition}
    Given two vertices in an n-dimensional space, $x$ and $y$, where $x = (x_1, x_2, \ldots, x_n)$ and $y = (y_1, y_2, \ldots, y_n)$, the elliptical distance metric is defined as:
        \begin{equation}
        d_{\text{elliptical}}(x, y) := \sqrt{\sum_{i=1}^{n}\left(\frac{x_i - y_i}{v_i}\right)^2},
        \end{equation}
    where $v_i$ represents the scaling factor for the $i$-th dimension, adjusting the importance and scale of different dimensions.
\end{Definition}

\begin{Theorem}
(Dynamic selection for elliptical-KNN):
Let $P$ be a set of vertices in an n-dimensional space and $q$ a query vertex. Assuming the distance metric is defined by the elliptical equation. The number of nearest neighbors, $k$, is dynamically determined by the function (\ref{eq: knn}), then a vertex $p \in P$ is an elliptical-KNN of $q$ if and only if for all $p' \in P$, $p$ satisfies the following condition for a number less than or equal to $k(q)$:
        \begin{equation}
        \sum_{i=1}^{n}\left(\frac{q_i - p_i}{v_i}\right)^2 \leq \sum_{i=1}^{n}\left(\frac{q_i - p'_i}{v_i}\right)^2,
        \end{equation}
\end{Theorem}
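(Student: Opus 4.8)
The plan is to reduce the claimed ``if and only if'' to an elementary ordering argument, exploiting that the elliptical distance of Definition~1 is a strictly increasing function of the weighted sum of squared per-coordinate deviations. Throughout I assume, as is implicit in the definition, that each scaling factor satisfies $v_i > 0$, so that $d_{\text{elliptical}}$ is a well-defined (pseudo-)metric and all expressions below make sense.

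First I would introduce the shorthand $S(q,p) := \sum_{i=1}^{n}\left(\frac{q_i - p_i}{v_i}\right)^2 \ge 0$, so that $d_{\text{elliptical}}(q,p) = \sqrt{S(q,p)}$. Because $t \mapsto \sqrt{t}$ is strictly increasing on $[0,\infty)$, for any two vertices $p, p' \in P$ one has $d_{\text{elliptical}}(q,p) \le d_{\text{elliptical}}(q,p') \iff S(q,p) \le S(q,p')$. Hence ordering the vertices of $P$ by nondecreasing elliptical distance to $q$ produces exactly the ordering induced by $S(q,\cdot)$; this is precisely the ordering computed by the \texttt{sort} step of Alg.~\ref{alg:ellipseNearest}, which makes the two notions of ``nearest'' coincide.

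Next I would recall that, by construction, the elliptical-KNN of $q$ is the set of the $k(q)$ vertices of $P$ closest to $q$ in the elliptical metric, i.e.\ the first $k(q)$ entries of $P$ once it is sorted as above. Therefore $p \in P$ is an elliptical-KNN of $q$ if and only if $p$ is not preceded by $k(q)$ or more vertices that are strictly closer, equivalently at most $k(q)$ vertices $p'$ satisfy $d_{\text{elliptical}}(q,p') \le d_{\text{elliptical}}(q,p)$ counted through $p$'s own position. Substituting the equivalence from the previous step, this is exactly the condition that $\sum_{i=1}^{n}\left(\frac{q_i - p_i}{v_i}\right)^2 \le \sum_{i=1}^{n}\left(\frac{q_i - p'_i}{v_i}\right)^2$ holds for all but fewer than $k(q)$ choices of $p' \in P$, which is the displayed claim; the value of $k(q)$ furnished by (\ref{eq: knn}) enters only as the cardinality threshold and plays no role in the equivalence, so the statement in fact holds for any admissible $k$.

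The only real obstacle is bookkeeping around ties: if several vertices are equidistant from $q$ under $d_{\text{elliptical}}$, the ``top $k(q)$'' set is not unique, and the non-strict inequality in the statement must be interpreted accordingly. I would dispose of this either by adopting the same deterministic tie-break used by the \texttt{sort} routine in Alg.~\ref{alg:ellipseNearest}, or by phrasing the characterization so that \emph{some} valid choice of the $k(q)$ nearest vertices realizes the displayed inequalities. With that convention fixed, the proof is a direct unpacking of Definition~1 together with strict monotonicity of the square root, and nothing deeper is required.
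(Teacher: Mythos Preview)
Your argument is correct as far as the literal ``if and only if'' goes: reducing $d_{\text{elliptical}}$ to $\sqrt{S(q,\cdot)}$ and invoking the strict monotonicity of $\sqrt{\,\cdot\,}$ on $[0,\infty)$ immediately shows that ranking by elliptical distance and ranking by the weighted sum of squares coincide, so membership among the first $k(q)$ entries after sorting is exactly the displayed inequality. Your remark that the specific form of $k(q)$ from (\ref{eq: knn}) plays no role in the equivalence, and your handling of ties via the tie-break of Alg.~\ref{alg:ellipseNearest}, are both appropriate.

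However, the paper's own proof takes an entirely different tack and does not in fact establish the biconditional you prove. Instead of arguing the equivalence, the paper differentiates $k(q)=\eta e\,(1+1/n)\log q$ to obtain $k'(q)=\eta e\,(1+1/n)/q$ and uses the decaying derivative to argue heuristically that the logarithmic growth of $k$ ``optimizes'' the search: in denser regions (larger $q$) only a small increase in $k$ is needed, yielding an adaptive balance between accuracy and computational cost. In other words, the paper treats the theorem as a claim about the \emph{quality} of the dynamic choice of $k$, not as an order-theoretic characterization of which vertices are returned. What your approach buys is a genuine proof of the stated iff, which the paper's derivative argument does not supply; what the paper's argument buys is a (qualitative) justification for why (\ref{eq: knn}) is the right scaling, which your proof explicitly sets aside as irrelevant to the equivalence.
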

\begin{proof}
To prove that this selection optimizes the search process, we consider the continuous function \(f(q) = k(q)\) and its derivative with respect to \(q\),
    \begin{equation}
        k^{\prime}(q) = \frac{df(q)}{dq} = \eta e \left(1 + \frac{1}{n}\right) \frac{1}{q}.
    \end{equation}
This derivative indicates that as \(q\) increases, the rate of change of \(k(q)\) decreases, suggesting that \(k(q)\) grows logarithmically with \(q\).
The optimization comes from the fact that in denser regions of the space (higher \(q\)), a minor increase in \(k\) is needed to capture a representative set of nearest neighbors due to the logarithmic relationship. This ensures the search is adaptive and efficient, focusing computational resources where they are most needed.
By adjusting \(k\) dynamically, the search process can accommodate the varying density of vertices in different regions of the space, leading to an optimized balance between accuracy and computational cost.
Hence, the dynamic selection of \(k\) based on \(q\) is shown to optimize the exploration under the defined conditions.
\end{proof}
In high-dimensional path planning, the elliptical-KNN search (Alg.~\ref{alg:elliptcal_search},~\ref{alg:ellipse_neighbors}, and~\ref{alg:ellipseNearest}) becomes an ellipsoidal search, where the corresponding force component in that dimension weights each axis of the ellipsoid. This weighting inherently incorporates the relative information between the current state and other samples in the state space. 
The search becomes directional and informed.
Prioritizing the search effort on areas more likely to yield an improved path quality.

The directionality of the search is a direct consequence of the force direction, which not only enhances the algorithm's efficiency by guiding it towards obstacle-free regions but also encodes valuable information about the state space's topology and obstacle distribution. 
This enables FDIT* to navigate complex environments efficiently by leveraging force information for decision-making on search directions.

\section{Formal Analysis}~\label{sec: formalAnalysis}
In this paper, we refer to Definition 24 from~\cite{karaman2011sampling} to establish the concept of almost-sure asymptotic optimality. 

\subsection{Almost-Sure Asymptotically Optimal Path}
The FDIT* algorithm utilizes an increasingly dense RGG approximation similar to EIT*, in which the underlying graph almost certainly includes an asymptotically optimal path. The search in FDIT* is also asymptotically resolution-optimal. Considering EIT* is an almost-surely asymptotically optimal planner~\cite{gammell2022planner}. Therefore, FDIT*'s RGG approximation almost certainly contains an optimal path.
\subsection{Elliptical-KNN vs. Traditional KNN}
%
%
\subsubsection{Adaptive search area vs. fixed radius}Traditional KNN search uses a fixed-radius spherical area. In contrast, FDIT* employs an adaptive elliptical-KNN search, adjusting the ellipsoidal search space based on the net force direction and magnitude. This dynamic adaptation enables FDIT* to focus the search more effectively in the direction indicated by the current state's force vector.


\subsubsection{Directional information and relative weighting}The elliptical-KNN search incorporates directional information and relative weighting based on the force components in each dimension. In contrast, the traditional KNN search treats all regions equally, inadequate the capability to prioritize search directions based on the state space's dynamics or topology. 
The elliptical search inherently includes information about other vertices' relative positions to the current state, enhancing the search's relevance and efficiency.

\subsubsection{Search efficiency in high-dimensional spaces}
The elliptical-KNN search incorporates directional information and relative weighting based on force components in each dimension. Unlike traditional KNN search, which treats all regions equally, it can prioritize search directions based on the state space's dynamics and topology. This approach inherently includes information about other vertices' relative positions, enhancing the search's relevance and efficiency.


\subsubsection{Improved path optimization}
By configuring the NN search area with invalid vertices and the resultant force direction, the algorithm is probable to find valuable sampled states during the NN searching process, which form paths that are both feasible and optimal, particularly in environments with numerous obstacles and high-dimensional \textit{$\mathcal{C}$-space}.
\begin{figure}[t!]
    \centering
    \begin{tikzpicture}
    \node[inner sep=0pt] (russell) at (-4.0,0.0)
    {\includegraphics[width=0.24\textwidth]{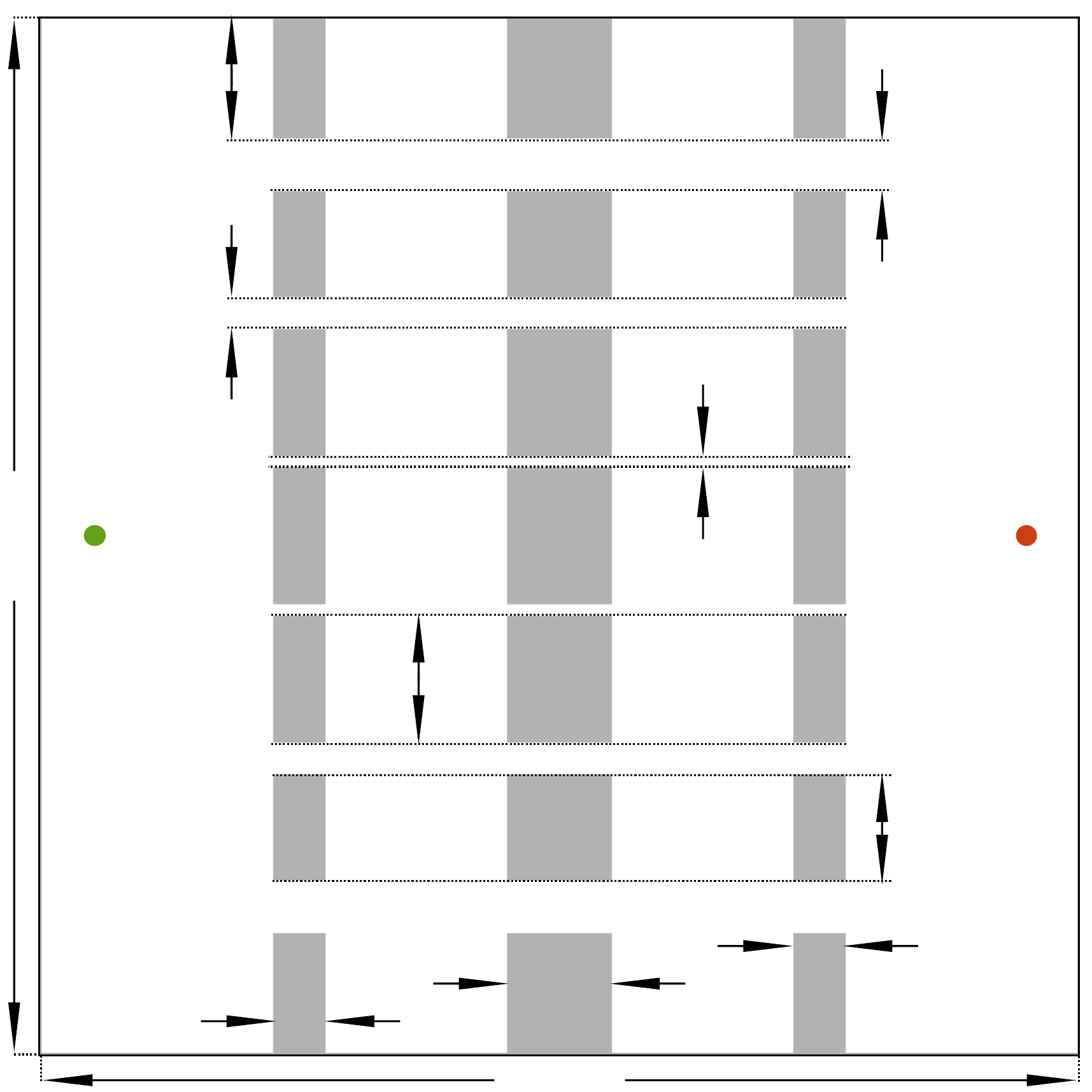}};
    \node[inner sep=0pt] (russell) at (0.25,0.0)
    {\includegraphics[width=0.24\textwidth]{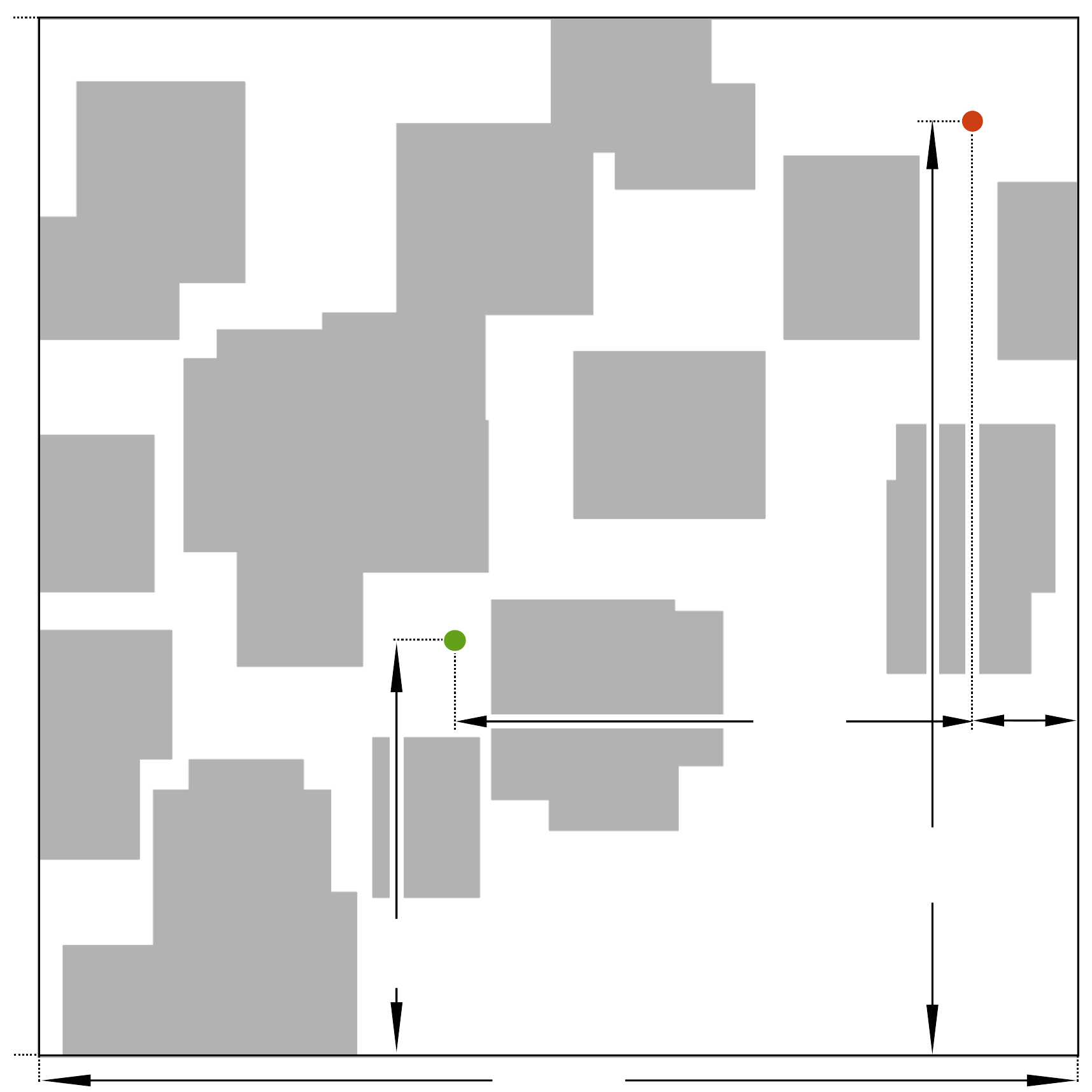}};
    \scriptsize
    
    \node [rotate=90] at (-5.4,0.93) {0.03};    
    \node [rotate=90] at (-4.68,-0.52) {0.125};    
    \node [rotate=90] at (-5.4,1.8) {0.12};
    \node [rotate=90] at (-2.53,-1.05) {0.1};
    \node [rotate=90] at (-3.55,0.6) {0.01};
    \node [rotate=90] at (-2.53,1.53) {0.05};
    \node at (-5.38,-1.7) {0.05};
    \node at (-4.38,-1.55) {0.1};
    \node at (-2.45,-1.7) {0.05};
    \node [rotate=90] at (-6.12,0.05) {1.0};
    \node at (-4.0,-2.13) {1.0};
    \node at (-5.53,-0.2) {(0.1,0.5)};
    \node at (-2.37,-0.2) {(0.9,0.5)};
    \node at (-5.6,0.3) {\color{teal} Start};
    \node at (-2.3,0.3) {\color{purple} Goal};
    \node at (1.24,-0.68) {0.5};
    \node at (0.25,-2.13) {1.0};
    \node [rotate=90] at (-0.34,-1.58) {0.4};
    \node [rotate=90] at (1.76,-1.24) {0.9};
    \node  at (2.1,-0.9) {0.1};

    \node at (-0.22,-0.21) {\color{teal} Start};
    \node at (1.95,1.85) {\color{purple} Goal};

    \node at (-4.0,-2.51) {\small (a) Dividing Wall-gaps};
    \node at (0.25,-2.51) {\small (b) Random Rectangles};
    \end{tikzpicture}
    \caption{Simulated planning problems were visualized using a 2D representation. The possible states, represented by $X \subset \mathbb{R}^n$, were restricted to a hypercube with a side length of 1.0 for both problems. Ten variations of the dividing wall-gaps and random rectangles experiment were tested, and the results are shown in Fig.~\ref{fig: result}.}
    \label{fig: testEnv}
    \vspace{-1.7em} 
\end{figure}
\subsection{Synthetic Electromagnetic Guidance Method}\label{subsec:guidance}
A virtual electromagnetic field is constructed with the target vertex as the center. The current vertex's real-time position coordinates are denoted as \( x \), with the target vertex exerting an electromagnetic attraction or repulsion on the current vertex, represented as \( F_{\text{attractive}}(x) \), and the electromagnetic potential energy denoted as \( \mathcal{E}_{\text{attractive}}(x) \). Surrounding vertices in obstacles exert an electromagnetic repulsive force on the current vertex, represented as \( F_{\text{repulsive}}(x) \), with the electromagnetic repulsive potential energy denoted as \( \mathcal{E}_{\text{repulsive}}(x) \).
The electromagnetic potential energy at the current state's position \( x \), \( \mathcal{E}_{\text{attractive}}(x) \), is expressed as follows:
\begin{equation}
    \mathcal{E}_{\text{attractive}}(x) = \mathbf{k}_a \cdot \frac{1}{\left|x - x_{\text{positive}}\right|^2},
\end{equation}
where \( \mathbf{k}_a \) is the Coulomb’s charge coefficient related to the positive force, and \( x_{\text{positive}} \) denotes the location of the positive (valid) vertex.
The electromagnetic force (Alg.~\ref{computationforcedirection}, line~\ref{line:forceAttra}) exerted on the current vertex at position \( x \), \( F_{\text{attractive}}(x) \) , is:
\begin{equation}
    F_{\text{attractive}}(x) = \mathbf{k}_a \cdot \frac{1}{\left|x - x_{\text{positive}}\right|^3} \cdot (x - x_{\text{positive}}),
\end{equation}

The electromagnetic repulsive potential energy at the current vertex's position \( x \) is represented as \( \mathcal{E}_{\text{repulsive}}(x) \), where \( \mathbf{k}_r \) is the charge coefficient related to the negative vertex, \( x_{\text{negative}} \) is the negative (invalid) vertex's position, and \( \rho_0 \) is the range within which the negative vertex can exert electromagnetic repulsion on the current vertex, is:
\begin{equation}
    \mathcal{E}_{\text{repulsive}}(x) = \left\{\begin{array}{cl}
        -\mathbf{k}_r \cdot \frac{1}{\left|x - x_{\text{negative}}\right|^2} & , \left|x - x_{\text{negative}}\right| \leq \rho_0 \\
        0 & , \left|x - x_{\text{negative}}\right| > \rho_0
    \end{array}\right.,
\end{equation}

The electromagnetic repulsive force on the current vertex at position \( x \), \( F_{\text{repulsive}}(x) \), is:
\begin{equation}
    F_{\text{repulsive}}(x) = \left\{\begin{array}{cl}
        -\mathbf{k}_r \cdot \frac{(x - x_{\text{negative}})}{\left|x - x_{\text{negative}}\right|^3} & , \left|x - x_{\text{negative}}\right| \leq \rho_0 \\
        0 & , \left|x - x_{\text{negative}}\right| > \rho_0
    \end{array}\right..
\end{equation}

Through these adjustments, we have constructed a virtual potential field model based on Coulomb's law suitable for electromagnetic field environments, for path planning of the current vertex. Considering the interactions between charges, including attraction and repulsion, this model enables effective navigation of the current vertex in an environment containing positive and negative vertices.
The following forms for electromagnetic potential energy and force expressions in high-dimensional state spaces are proposed:

\begin{equation}
\vec{\mathcal{E}}_{\text{attractive},i}(\mathbf{x}) = \mathbf{k}_e \sum_{i=1}^{m_{\text{valid}}} \frac{q_1 q_{2, i}}{\|\mathbf{x} - \mathbf{x}_{\text{positive}, i}\|^{n-1}},
\end{equation}
\begin{equation}
\vec{\mathcal{E}}_{\text{repulsive},j}(\mathbf{x}) = -\mathbf{k}_e \sum_{j=1}^{m_{\text{invalid}}} \frac{q_1 q_{2, j}}{\|\mathbf{x} - \mathbf{x}_{\text{negative}, j}\|^{n-1}},
\end{equation}
where \( m_{\text{valid}} \) and \( m_{\text{invalid}} \) are the number of valid and invalid sample vertices in \textit{$\mathcal{C}$-space}, respectively.

\begin{equation}
\vec{F}_{\text{attractive},i}(\mathbf{x}) = \mathbf{k}_e \sum_{i=1}^{m_{\text{valid}}} \frac{q_1 q_{2, i}}{\|\mathbf{x} - \mathbf{x}_{\text{positive}, i}\|^{n}} (\mathbf{x}_{\text{positive}, i} - \mathbf{x}),
\end{equation}
\begin{equation}
\vec{F}_{\text{repulsive},j}(\mathbf{x}) = -\mathbf{k}_e \sum_{j=1}^{m_{\text{invalid}}} \frac{q_1 q_{2, j}}{\|\mathbf{x} - \mathbf{x}_{\text{negative}, j}\|^{n}} (\mathbf{x}_{\text{negative}, j} - \mathbf{x}).
\end{equation}

In these formulas:
\begin{itemize}
    \item \( \mathbf{k}_e \) is a proportionality constant to Coulomb's constant.
    \item \( q_1 \) is the charge of the current state, \( q_{2, i} \) and \( q_{2, j} \) are the charges of the \( i \)-th valid sample vertex and the \( j \)-th invalid sample vertex, respectively.
    \item \( \mathbf{x} \) is the position vector of the current state, \( \mathbf{x}_{\text{positive}, i} \) and \( \mathbf{x}_{\text{negative}, j} \) are the position vectors of the \( i \)-th valid vertex and the \( j \)-th invalid vertex, respectively.
    \item \( n \) is the dimensionality of the \textit{$\mathcal{C}$-space}.
\end{itemize}

In contemplating the combined influence of numerous sample vertices on the present condition, these expressions prove versatile  within high-dimensional state spaces.
%
\begin{figure*}[t!]
    \centering
    \begin{tikzpicture}
    \node[inner sep=0pt] (russell) at (4.1,8)
    {\includegraphics[width=0.49\textwidth]{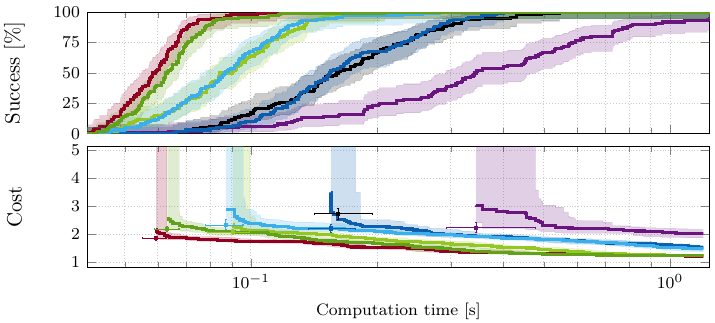}};
    \node[inner sep=0pt] (russell) at (4.1,3.5)
    {\includegraphics[width=0.49\textwidth]{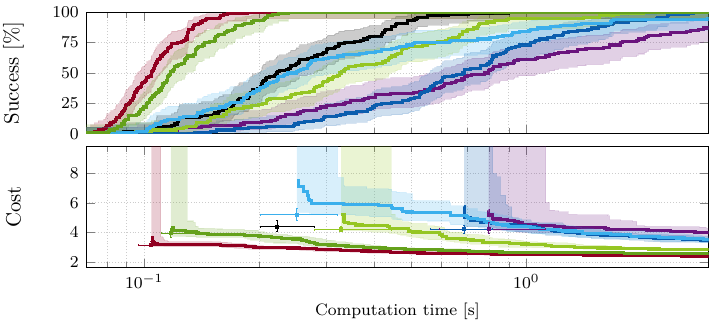}};
    \node[inner sep=0pt] (russell) at (4.1,-1)
    {\includegraphics[width=0.49\textwidth]{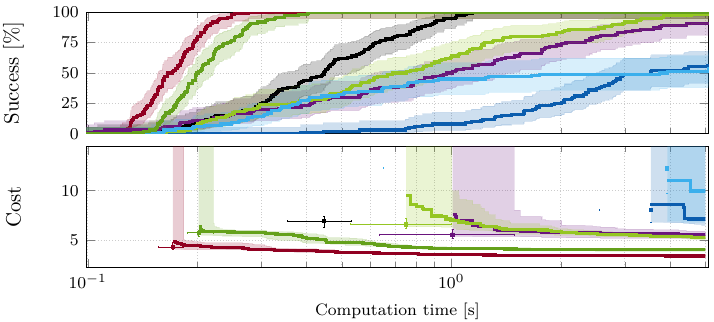}};

    \node[inner sep=0pt] (russell) at (-4.9,8)
    {\includegraphics[width=0.49\textwidth]{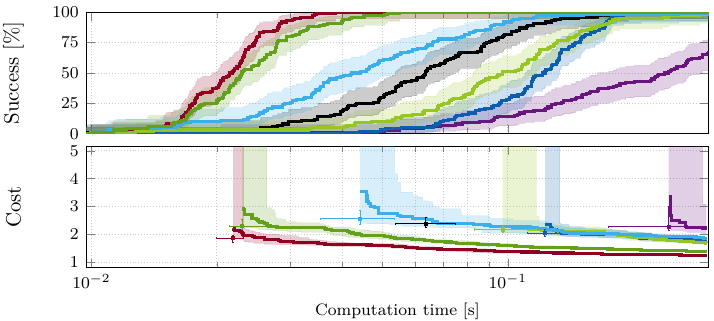}};
    \node[inner sep=0pt] (russell) at (-4.9,3.5)
    {\includegraphics[width=0.49\textwidth]{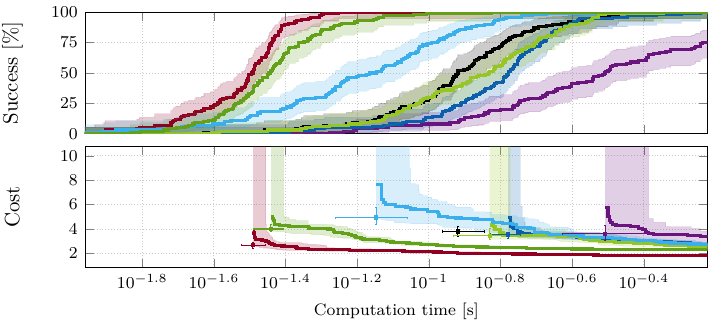}};  
    \node[inner sep=0pt] (russell) at (-4.85,-1){\includegraphics[width=0.496\textwidth]{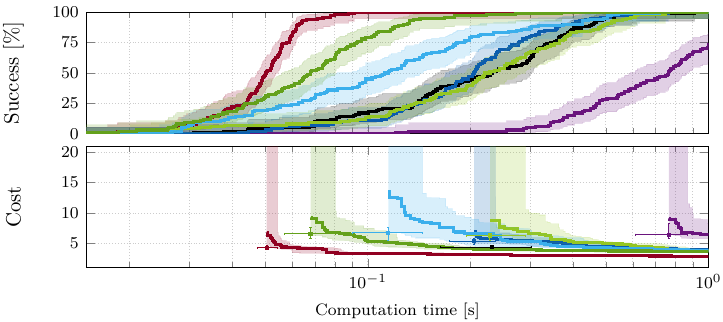}};

    \node[inner sep=0pt] (russell) at (0.0,-4.05){\includegraphics[width=0.8\textwidth]{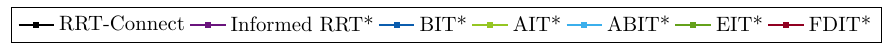}};

    \node at (-4.5,5.8) {\footnotesize (a) Dividing Wall-gaps (DW) in $\mathbb{R}^4$ - MaxTime: 0.3s};
    \node at (-4.5,1.3) {\footnotesize(c) Dividing Wall-gaps (DW) in $\mathbb{R}^8$ - MaxTime: 0.6s};
    \node at (-4.5,-3.2) {\footnotesize(e) Dividing Wall-gaps (DW) in $\mathbb{R}^{16}$ - MaxTime: 1.0s};

    \node at (4.5,5.8) {\footnotesize (b) Random Rectangles (RR) in $\mathbb{R}^4$ - MaxTime: 1.2s};
    \node at (4.5,1.3) {\footnotesize(d) Random Rectangles (RR) in $\mathbb{R}^8$ - MaxTime: 3.0s};
    \node at (4.5,-3.2) {\footnotesize(f) Random Rectangles (RR) in $\mathbb{R}^{16}$ - MaxTime: 5.0s};

    \end{tikzpicture}
    \vspace{-1.0em} 
    \caption{Detailed experimental results from Section~\ref{subsec:experi} are presented above. Fig. (a), (c), and (e) depict test benchmark dividing wall gaps outcomes in $\mathbb{R}^4$, $\mathbb{R}^8$ and $\mathbb{R}^{16}$, respectively. Panel (b) showcases ten random rectangle experiments in $\mathbb{R}^4$, while panels (d) and (f) demonstrate in $\mathbb{R}^8$ and $\mathbb{R}^{16}$. In the cost plots, boxes represent solution cost and time, with lines showing cost progression for planners (unsuccessful runs have infinite costs). Error bars provide nonparametric 99\% confidence intervals for solution cost and time.}
    \label{fig: result}
    \vspace{-1.8em}
\end{figure*}

\section{Experimental Results}\label{sec:experi}

FDIT* was evaluated against several existing algorithms, including different versions of RRT-Connect, Informed RRT*, BIT*, AIT*, ABIT*, and EIT* from the Open Motion Planning Library (OMPL). Tests were conducted in simulated environments ranging from $\mathbb{R}^4$ to $\mathbb{R}^{16}$ and real-world manipulation scenarios using an Intel i7 3.90 GHz processor with 32GB of LPDDR3 3200 MHz memory. The main goal was to minimize the median initial path length ($c^\textit{med}_\textit{init}$) over 100 runs. For all planners, the RGG constant $\eta$ was set to 1.1, and the rewire factor to 1.001. RRT-based algorithms used a 5\% goal bias, with maximum edge lengths adjusted for space dimensionality. FDIT*, BIT*, AIT*, ABIT*, and EIT* sampled 200 states per batch, utilizing Euclidean distance and effort heuristics with graph pruning. Unlike the others, FDIT* employed elliptical-KNN for NN search and rewiring.

FDIT*'s elliptical-KNN search method dynamically adjusted the search regions. The planner's adaptive mechanism optimizes the NN search areas and rewires at any time.

\subsection{Experimental Tasks}\label{subsec:experi}
The planners were subjected to testing across three distinct problem domains: $\mathbb{R}^4$, $\mathbb{R}^8$, and $\mathbb{R}^{16}$. In the first scenario, a constrained environment resembling a dividing wall with several narrow gaps was simulated, allowing valid paths in multiple general directions for non-intersecting solutions (Fig. \ref{fig: testEnv}a). Each planner underwent 100 runs, and the computation time for each anytime asymptotically optimal planner is demonstrated in the labels, with varying random seeds. The overall success rates and median path lengths for all planners are depicted in Fig.~\ref{fig: result}a, \ref{fig: result}c, and \ref{fig: result}e.
\begin{table}[t]
\caption{Benchmarks evaluation comparison (Fig.~\ref{fig: result})}
\centering
\resizebox{0.485\textwidth}{!}{
\begin{tabular}{p{1.4cm}||c|c|c|c|c|c|c}
 \hline
 & \multicolumn{3}{c}{${\text{EIT*}}$} & \multicolumn{3}{|c|}{$\textcolor{purple}{\text{FDIT*}}$} &\multirow{2}*{\large$c^\textit{med}_\textit{init}\color{purple}\Uparrow$ (\%)}\\
    &$t^\textit{med}_\textit{init}$ &$c^\textit{med}_\textit{init}$ &$c^\textit{med}_\textit{final}$ &$t^\textit{med}_\textit{init}$ &$c^\textit{med}_\textit{init}$ &$c^\textit{med}_\textit{final}$ \\
 \hline

    $\text{RR}-\mathbb{R}^4$   &0.0632   &2.1817   &1.2286 &0.0593 &1.8532 &1.2173 &15.06  \\
    $\text{RR}-\mathbb{R}^8$   &0.1175   &3.9503   &2.6092 &0.1043 &\textcolor{purple}{3.1590} &2.4199  &{20.03}  \\
    $\text{RR}-\mathbb{R}^{16}$   
    &{0.2019} &{5.7269} &{3.9973} &{0.1715} &\textcolor{purple}{4.2738} &\textcolor{purple}{3.3797}  &\textcolor{purple}{25.37}\\
    
    $\text{DW}-\mathbb{R}^4$    &0.0230 &2.2950 &1.3820 &0.0218 &1.8600 &1.2332  &18.95\\
    $\text{DW}-\mathbb{R}^8$  &0.0362  &4.0013  &2.2889 &0.0323 &\textcolor{purple}{2.6774} &\textcolor{purple}{1.8085} &\textcolor{purple}{33.34}  \\
    $\text{DW}-\mathbb{R}^{16}$   &0.0679   &6.5623   &3.6617 &\textcolor{purple}{0.0505} &\textcolor{purple}{4.2887} &\textcolor{purple}{2.9037}  &\textcolor{purple}{\textbf{34.65}}  \\
 \hline
\end{tabular}} \label{tab:benchmark}
\vspace{-1.3em} 
\end{table}

In the second test environment, random widths were assigned to \textit{axis-aligned hyperrectangles}, which were generated within the \textit{$\mathcal{C}$-space} arbitrarily (Fig. \ref{fig: testEnv}b). Unique random rectangle problems were created for each dimension of the \textit{$\mathcal{C}$-space}, and each planner was run 100 times for every instance. Fig.~\ref{fig: result}b, \ref{fig: result}d, and \ref{fig: result}f depict the success rates and median path costs within the computation time.

As observed in Table \ref{tab:benchmark}, there's a median cost improvement across varied benchmark scenarios, correlating with dimensionality. In the case of the DW-$\mathbb{R}^{16}$ scenario, the initial median solution cost exhibits a reduction of up to 34.65\%.
\subsection{Path Planning for DARKO}
FDIT* demonstrated its effective elliptical-KNN techniques during an industry container and multi-layer shelf arrangement tasks (Fig.~\ref{fig: darko_setup} and Fig.~\ref{fig: simulation}). DARKO (8-DoF) is a mobile manipulation robotic platform designed to tackle challenges within intralogistics supermarket cells. 
%
Inserting industry containers into narrow spaces is difficult due to component standardization. 
The task is to place the industry standard container between two larger containers on the shelf. Due to the tolerance scope between containers being $\leq$5mm, planning a collision-free feasible path is challenging.
%
%
This container-shelf arrangement task aims to optimize path length and success rate. All planners had 1.0 seconds to address this problem. Over 15 trials, FDIT* was 80\% successful with a median solution cost of 10.9225. EIT* was 73.3\% successful and had a median solution cost of 14.5726. AIT* was 60\% successful with a median solution cost of 16.7038, and ABIT* was 66.67\% successful with a median solution cost of 16.4719. FDIT* consistently optimizes the elliptical-KNN by integrating the information of \textit{invalid vertices} in these \textit{$\mathcal{C}$-space} obstacles. This allows FDIT* to become more problem-specific and achieve higher success rates in completing tasks. 
The detailed behavior of DARKO can be viewed in \href{https://youtu.be/s7CHNLbVo6k}{\textcolor{blue}{https://youtu.be/s7CHNLbVo6k}}.
\section{Discussion \string& Conclusion}

In this paper, we introduce FDIT*, an advanced sampling-based planner using elliptical-KNN search, which improves on traditional KNN by including invalid vertices and calculating force directions with Coulomb's law. This method enhances path planning algorithms.
Determining vertex charges \( q_{2,i} \) and \( q_{2,j} \) is crucial, as it affects their influence on the current state \( q_1 \). Currently, all vertices have the same charge, but practical applications may require different charges based on potential values. Future research will focus on this quantification.
In tests with the DARKO robot, FDIT* demonstrated its adaptability by achieving short path lengths and quickly generating initial solutions, especially in narrow environments. This showcases its practical effectiveness.

In conclusion, FDIT* uses invalid vertices and physical force dynamics to focus search regions and reduce costs. In the future, we could further test FDIT* in bio-inspired robots~\cite{Bing2023SR}, consider incorporating human acceptability and comfort into planning~\cite{cai2022human}, and investigate subset optimization methods~\cite{zhang2025TASE} for robot motion planning.

\bibliographystyle{IEEEtran}
\bibliography{references}

\begin{thebibliography}{10}
\providecommand{\url}[1]{#1}
\csname url@samestyle\endcsname
\providecommand{\newblock}{\relax}
\providecommand{\bibinfo}[2]{#2}
\providecommand{\BIBentrySTDinterwordspacing}{\spaceskip=0pt\relax}
\providecommand{\BIBentryALTinterwordstretchfactor}{4}
\providecommand{\BIBentryALTinterwordspacing}{\spaceskip=\fontdimen2\font plus
\BIBentryALTinterwordstretchfactor\fontdimen3\font minus \fontdimen4\font\relax}
\providecommand{\BIBforeignlanguage}[2]{{%
\expandafter\ifx\csname l@#1\endcsname\relax
\typeout{** WARNING: IEEEtran.bst: No hyphenation pattern has been}%
\typeout{** loaded for the language `#1'. Using the pattern for}%
\typeout{** the default language instead.}%
\else
\language=\csname l@#1\endcsname
\fi
#2}}
\providecommand{\BIBdecl}{\relax}
\BIBdecl

\bibitem{cai2023sampling}
K.~Cai, W.~Chen, D.~Dugas, R.~Siegwart, and J.~J. Chung, ``Sampling-based path planning in highly dynamic and crowded pedestrian flow,'' \emph{IEEE Transactions on Intelligent Transportation Systems}, 2023.

\bibitem{penrose2003random}
M.~Penrose, \emph{Random geometric graphs}.\hskip 1em plus 0.5em minus 0.4em\relax OUP Oxford, 2003, vol.~5.

\bibitem{hart1968formal}
P.~E. Hart, N.~J. Nilsson, and B.~Raphael, ``A formal basis for the heuristic determination of minimum cost paths,'' \emph{IEEE transactions on Systems Science and Cybernetics}, vol.~4, no.~2, pp. 100--107, 1968.

\bibitem{koenig2004lifelong}
S.~Koenig, M.~Likhachev, and D.~Furcy, ``Lifelong planning a*,'' \emph{Artificial Intelligence}, vol. 155, no. 1-2, pp. 93--146, 2004.

\bibitem{lavalle2001randomized}
S.~M. LaValle and J.~J. Kuffner~Jr, ``Randomized kinodynamic planning,'' \emph{The international journal of robotics research}, vol.~20, no.~5, pp. 378--400, 2001.

\bibitem{connect2000}
J.~Kuffner and S.~LaValle, ``Rrt-connect: An efficient approach to single-query path planning,'' in \emph{Proceedings 2000 ICRA. Millennium Conference. IEEE International Conference on Robotics and Automation. Symposia Proceedings (Cat. No.00CH37065)}, vol.~2, 2000, pp. 995--1001 vol.2.

\bibitem{karaman2011sampling}
S.~Karaman and E.~Frazzoli, ``Sampling-based algorithms for optimal motion planning,'' \emph{The international journal of robotics research}, vol.~30, no.~7, pp. 846--894, 2011.

\bibitem{fmt2015}
L.~Janson, E.~Schmerling, A.~Clark, and M.~Pavone, ``Fast marching tree: A fast marching sampling-based method for optimal motion planning in many dimensions,'' \emph{The International Journal of Robotics Research}, vol.~34, no.~7, pp. 883--921, 2015.

\bibitem{gammell2014informed}
J.~D. Gammell, S.~S. Srinivasa, and T.~D. Barfoot, ``Informed rrt*: Optimal sampling-based path planning focused via direct sampling of an admissible ellipsoidal heuristic,'' in \emph{2014 IEEE/RSJ international conference on intelligent robots and systems}.\hskip 1em plus 0.5em minus 0.4em\relax IEEE, 2014.

\bibitem{gammell2018informed}
J.~D. Gammell, T.~D. Barfoot, and S.~S. Srinivasa, ``Informed sampling for asymptotically optimal path planning,'' \emph{IEEE Transactions on Robotics}, vol.~34, no.~4, pp. 966--984, 2018.

\bibitem{gammell2020batch}
J.~D. \vspace{0mm}Gammell, T.~D. Barfoot, and S.~S. Srinivasa, ``Batch informed trees (bit*): Informed asymptotically optimal anytime search,'' \emph{The International Journal of Robotics Research}, vol.~39, no.~5, 2020.

\bibitem{strub2020advanced}
M.~P. \vspace{0mm}Strub and J.~D. Gammell, ``Advanced bit (abit): Sampling-based planning with advanced graph-search techniques,'' in \emph{2020 IEEE International Conference on Robotics and Automation (ICRA)}.\hskip 1em plus 0.5em minus 0.4em\relax IEEE, 2020, pp. 130--136.

\bibitem{strub2022adaptively}
M.~P. Strub and J.~D. Gammell, ``Adaptively informed trees (ait*) and effort informed trees (eit*): Asymmetric bidirectional sampling-based path planning,'' \emph{The International Journal of Robotics Research}, vol.~41, no.~4, pp. 390--417, 2022.

\bibitem{strub2020adaptively}
M.~P. \vspace{0mm}Strub and J.~D. Gammell, ``Adaptively informed trees (ait*): Fast asymptotically optimal path planning through adaptive heuristics,'' in \emph{2020 IEEE International Conference on Robotics and Automation (ICRA)}.\hskip 1em plus 0.5em minus 0.4em\relax IEEE, 2020, pp. 3191--3198.

\bibitem{zhang25g3t}
L.~Zhang, Y.~Ling, Z.~Bing, F.~Wu, S.~Haddadin, and A.~Knoll, ``Tree-based grafting approach for bidirectional motion planning with local subsets optimization,'' \emph{IEEE Robotics and Automation Letters}, vol.~10, no.~6, pp. 5815--5822, 2025.

\bibitem{Kleinbort_Salzman_Halperin_2020}
M.~Kleinbort, O.~Salzman, and D.~Halperin, ``Collision detection or nearest-neighbor search? on the computational bottleneck in sampling-based motion planning,'' \emph{Springer Proceedings in Advanced Robotics}, p. 624–639, 2020.

\bibitem{Xue2004}
F.~Xue and P.~Kumar, ``The number of neighbors needed for connectivity of wireless networks,'' \emph{Wireless Networks}, vol.~10, no.~2, p. 169–181, Mar. 2004.

\bibitem{1961r_near}
E.~N. Gilbert, ``Random plane networks,'' \emph{Journal of the Society for Industrial and Applied Mathematics}, vol.~9, no.~4, pp. 533--543, 1961.

\bibitem{solovey2020critical}
K.~Solovey and M.~Kleinbort, ``The critical radius in sampling-based motion planning,'' \emph{The International Journal of Robotics Research}, vol.~39, no. 2-3, pp. 266--285, 2020.

\bibitem{andoni2009nearest}
A.~Andoni, ``Nearest neighbor search: the old, the new, and the impossible,'' Ph.D. dissertation, Massachusetts Institute of Technology, 2009.

\bibitem{lenard1961exact}
A.~Lenard, ``Exact statistical mechanics of a one-dimensional system with coulomb forces,'' \emph{Journal of Mathematical Physics}, vol.~2, no.~5, pp. 682--693, 1961.

\bibitem{sucan2012open}
I.~A. Sucan, M.~Moll, and L.~E. Kavraki, ``The open motion planning library,'' \emph{IEEE Robotics \& Automation Magazine}, vol.~19, no.~4, pp. 72--82, 2012.

\bibitem{moll2015benchmarking}
M.~Moll, I.~A. Sucan, and L.~E. Kavraki, ``Benchmarking motion planning algorithms: An extensible infrastructure for analysis and visualization,'' \emph{IEEE Robotics \& Automation Magazine}, vol.~22, no.~3, pp. 96--102, 2015.

\bibitem{gammell2022planner}
J.~D. Gammell, M.~P. Strub, and V.~N. Hartmann, ``Planner developer tools (pdt): Reproducible experiments and statistical analysis for developing and testing motion planners,'' in \emph{Proceedings of the Workshop on Evaluating Motion Planning Performance (EMPP), IEEE/RSJ International Conference on Intelligent Robots and Systems (IROS)}, 2022.

\bibitem{ohmori2021predictive}
S.~Ohmori, ``A predictive prescription using minimum volume k-nearest neighbor enclosing ellipsoid and robust optimization,'' \emph{Mathematics}, vol.~9, no.~2, p. 119, 2021.

\bibitem{Bing2023SR}
Z.~Bing, A.~Rohregger, F.~Walter, Y.~Huang, P.~Lucas, F.~O. Morin, K.~Huang, and A.~Knoll, ``Lateral flexion of a compliant spine improves motor performance in a bioinspired mouse robot,'' \emph{Science Robotics}, vol.~8, no.~85, 2023.

\bibitem{cai2022human}
K.~Cai, W.~Chen, C.~Wang, S.~Song, and M.~Q.-H. Meng, ``Human-aware path planning with improved virtual doppler method in highly dynamic environments,'' \emph{IEEE Transactions on Automation Science and Engineering}, vol.~20, no.~2, pp. 1304--1321, 2022.

\bibitem{zhang2025TASE}
L.~Zhang, K.~Cai, Y.~Zhang, Z.~Bing, C.~Wang, F.~Wu, S.~Haddadin, and A.~Knoll, ``Estimated informed anytime search for sampling-based planning via adaptive sampler,'' \emph{IEEE Transactions on Automation Science and Engineering}, vol.~22, pp. 18\,580--18\,593, 2025.

\end{thebibliography}
\balance

\end{document}